\makeatletter\patchcmd{\@algocf@start}{-1.5em}{0pt}{}{}\makeatother
\newtheorem{axiom}{Axiom}
\newtheorem{definition}{Definition}
\newtheorem{theorem}{Theorem}
\newcommand{\nearest}{{n_1}}
\newcommand{\second}{{n_2}}
\newcommand{\MS}{\ensuremath{\mathrm{\tilde{S}}}}
\newcommand{\ms}{\ensuremath{\mathrm{\tilde{s}}}}
\newcommand{\HS}{\ensuremath{\mathrm{\hat{S}}}}
\newcommand{\HM}{\ensuremath{\mathrm{\hat{M}}}}
\newcommand{\tsum}{\textstyle\sum\nolimits}
\newcommand{\mean}{\operatorname{mean}}
\newcommand{\argmax}{\operatorname{arg\,max}}
\newcommand{\argmin}{\operatorname{arg\,min}}
\newenvironment{proof}{\paragraph{Proof:}}{}
\pgfplotsset{compat=newest}
\begin{document}
\let\WriteBookmarks\relax
\def\floatpagepagefraction{1}
\def\textpagefraction{.001}

\shorttitle{Medoid Silhouette clustering with automatic cluster number selection}

\shortauthors{L. Lenssen and E. Schubert}

\title [mode = title]{Medoid Silhouette clustering with automatic cluster number selection}

\tnotemark[1] 

\tnotetext[1]{This is an extended version of Lenssen and Schubert, Clustering by Direct Optimization of the Medoid Silhouette in Similarity Search and Applications. SISAP 2022. \href{https://doi.org/10.1007/978-3-031-17849-8_15}{DOI:10.1007/978-3-031-17849-8\_15}}

%

\author{Lars Lenssen}[orcid=0000-0003-0037-0418]

\ead{lars.lenssen@tu-dortmund.de}

\author{Erich Schubert}[orcid=0000-0001-9143-4880]

\ead{erich.schubert@tu-dortmund.de}

\affiliation{organization={TU Dortmund University},
            addressline={Informatik VIII}, 
            city={Dortmund},
            citysep={}, 
            postcode={44221}, 
            country={Germany}}

\cortext[0]{Corresponding authors} 


\begin{abstract}
The evaluation of clustering results is difficult, highly dependent on the evaluated data set and the perspective of the beholder. There are many different clustering quality measures, which try to provide a general measure to validate clustering results. A very popular measure is the Silhouette. We discuss the efficient medoid-based variant of the Silhouette, perform a theoretical analysis of its properties, provide two fast versions for the direct optimization, and discuss the use to choose the optimal number of clusters. We combine ideas from the original Silhouette with the well-known PAM algorithm and its latest improvements FasterPAM. One of the versions guarantees equal results to the original variant and provides a run speedup of $O(k^2)$. In experiments on real data with 30000 samples and $k$=100, we observed a 10464$\times$ speedup compared to the original PAMMEDSIL algorithm.
Additionally, we provide a variant to choose the optimal number of clusters directly.
\end{abstract}


\begin{keywords}
Medoid Silhouette \sep Silhouette Coefficient \sep Clustering-Quality Measure (CQM) \sep Partitioning Around Medoids (PAM)\sep Cluster Analysis \sep Number of Clusters
\end{keywords}

\begin{NoHyper}
\maketitle 
\end{NoHyper}


\section{Introduction}\label{}
In cluster analysis, the user is interested in discovering previously
unknown structure in the data, as opposed to classification,
where one tries to predict the known structure (i.e., labels) for new data points.
Sometimes, clustering can also be interpreted as data quantization and approximation,
for example, in k-means, where the objective is to minimize the sum of squared errors when approximating
the data with $k$ average vectors, spherical k-means, where we maximize the cosine similarities
to the $k$ centers, and k-medoids, where we minimize the sum of distances
when approximating the data by $k$ data points.
Other clustering approaches such as DBSCAN \cite{DBLP:conf/kdd/EsterKSX96,DBLP:journals/tods/SchubertSEKX17}
cannot easily be interpreted this way, but discover structure related
to connected components and density-based minimal spanning trees \cite{DBLP:conf/lwa/SchubertHM18,Beer/23a}.

The evaluation of clusterings is a challenge, as there are no labels available.
While many internal (``unsupervised'', not relying on external labels)
evaluation measures were proposed such as the Silhouette~\cite{Rousseeuw/87a},
the Davies-Bouldin index~\cite{Davies/79a}, the Variance-Ratio criterion~\cite{Calinski/74a}, the Dunn index~\cite{Dunn/74a}, and many more,
using these indexes for evaluation suffers from inherent challenges.
\citet{DBLP:journals/sadm/VendraminCH10} survey 40 such measures and variants, and find the Silhouette to be one of the most robust.
\citet{Jaskowiak/16a} combine different internal validation measures in an ensemble to improve performance.
Bonner \cite{DBLP:journals/ibmrd/Bonner64} noted that
``none of the many specific definitions [...] seems best in any general sense'',
and results are subjective ``in the eye of the beholder'' as noted by
Estivill-Castro~\cite{DBLP:journals/sigkdd/Estivill-Castro02}.
While these claims refer to clustering methods, not evaluation methods,
we argue that these do not differ substantially:
each internal cluster evaluation method implies a clustering algorithm obtained %
by enumeration of all candidate clusterings, keeping the best.
The main difference between clustering algorithms and internal evaluation then
is whether or not we know an efficient optimization strategy. K-means 
is an optimization strategy for the sum of squares evaluation measure,
while the k-medoids algorithms PAM, and alternating optimization~\cite{Maranzana/63a} are two different strategies for
optimizing the sum of distances from a set of $k$~representatives chosen from the data, a variant of the facility location problem.
In this article, we focus on the evaluation measure known as the
Silhouette~\cite{Rousseeuw/87a}, %
and discuss an efficient algorithm to optimize a variant of this measure,
inspired by the well-known PAM algorithm \cite{Kaufman/Rousseeuw/87a, Kaufman/Rousseeuw/90c}
and FasterPAM~\cite{DBLP:journals/is/SchubertR21,DBLP:conf/sisap/SchubertR19}. Silhouette is also a popular measure to choose the number of clusters in k-medoids or even k-means. However, classic visual inspection of the Silhouette plot is only feasible for small data sets, and users typically to rely on the aggregate coefficient. We can choose the number of clusters performantly even on larger data sets with our more efficient integration.

This article is an extended version of:\\
Lenssen, L., Schubert, E. (2022). Clustering by Direct Optimization of the Medoid Silhouette. In: Similarity Search and Applications. SISAP 2022. LNCS 13590. Springer, Cham. \href{https://doi.org/10.1007/978-3-031-17849-8_15}{DOI:10.1007/978-3-031-17849-8\_15}

In particular, this extended version adds the algorithm for automatically choosing the number of clusters.

\section{Silhouette and Medoid Silhouette}
The Silhouette~\cite{Rousseeuw/87a} is a popular measure to evaluate clustering validity, and
performs very well in empirical studies \cite{DBLP:journals/pr/ArbelaitzGMPP13,DBLP:journals/pr/BrunSHLCSD07}.
For
the given samples $X = \{x_1,\ldots,x_n\}$,
a dissimilarity measure $d:X\times X\rightarrow \mathbb{R}$,
and
the cluster labels $L=\{l_1,\ldots,l_n\}$ for each sample in $X$,
the Silhouette of a single element $i$ is calculated based on the average distance to its own cluster~$a_i$ and the smallest average distance to another cluster~$b_i$ as:
\begin{align*}
s_i(X, d, L) &= \tfrac{b_i-a_i}{\max(a_i,b_i)}
\;\text{, where}\\
a_i &= \phantom{\min\nolimits_{k\neq l_i}\;} \mean \left\{d(x_i, x_j) \mid l_j = l_i, i \neq j\right\}
\\
b_i &= \min\nolimits_{k\neq l_i}\;\mean\left\{d(x_i, x_j) \mid l_j = k\right\}
\;.
\end{align*}
The motivation is that ideally, each point is much closer to the cluster it is assigned to,
than to another ``second closest'' cluster. For $b_i \gg a_i$, the Silhouette approaches 1,
while for points with $a_i = b_i$ we obtain a Silhouette of 0, and negative values can arise
if there is another closer cluster and hence $b_i < a_i$.
If the cluster contains only a single element $a_i$ is undefined in this equation, and \citet{Rousseeuw/87a} uses $s_i=0$ then.
The Silhouette values $s_i$ can then be used to visualize the cluster quality by sorting
objects by label $l_i$ first, and then by descending $s_i$, to obtain the Silhouette plot. Figure~\ref{fig2} shows an example of the visualization for the data set of \citet{Klein/15a}.
\begin{figure}[t]
\begin{subfigure}{0.46\textwidth}\centering
\includegraphics[width=1.0\textwidth]{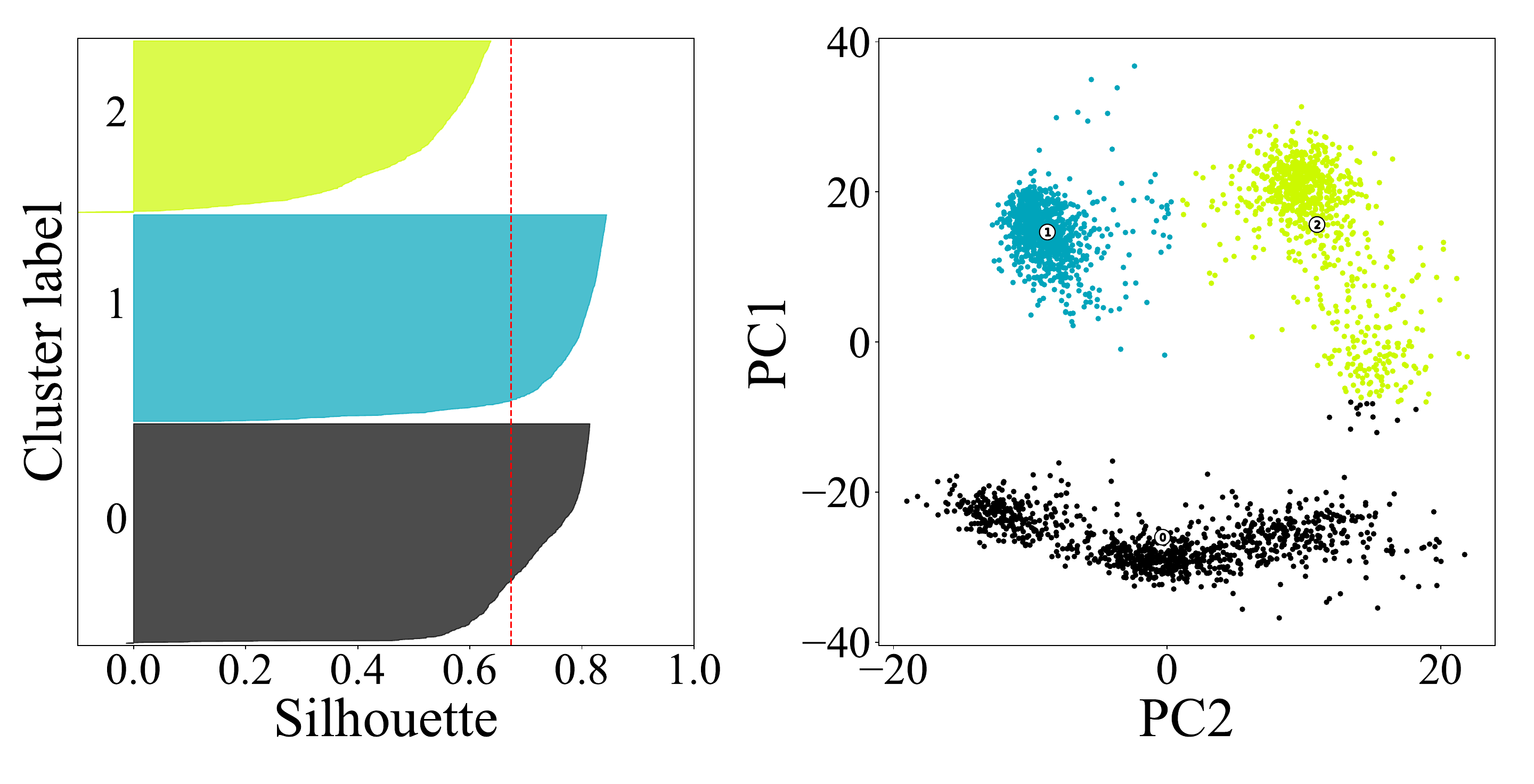}
\caption{$k=3$ with $ASW=0.67$}
\label{fig2.1}
\end{subfigure}
\begin{subfigure}{0.46\textwidth}\centering
\includegraphics[width=1.0\textwidth]{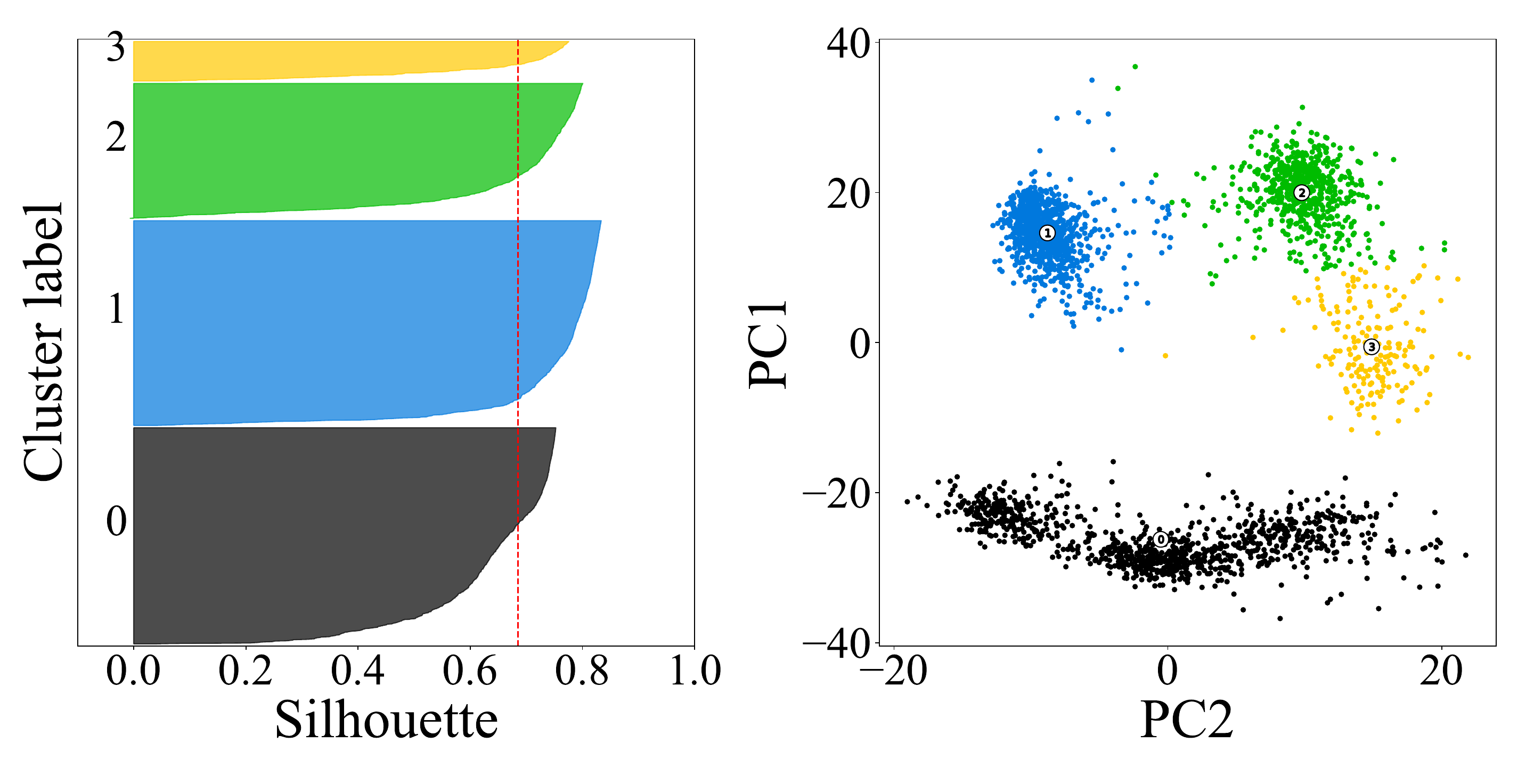}
\caption{$k=4$ with $ASW=0.68$}
\label{fig2.2}
\end{subfigure}
\begin{subfigure}{0.46\textwidth}\centering
\includegraphics[width=1.0\textwidth]{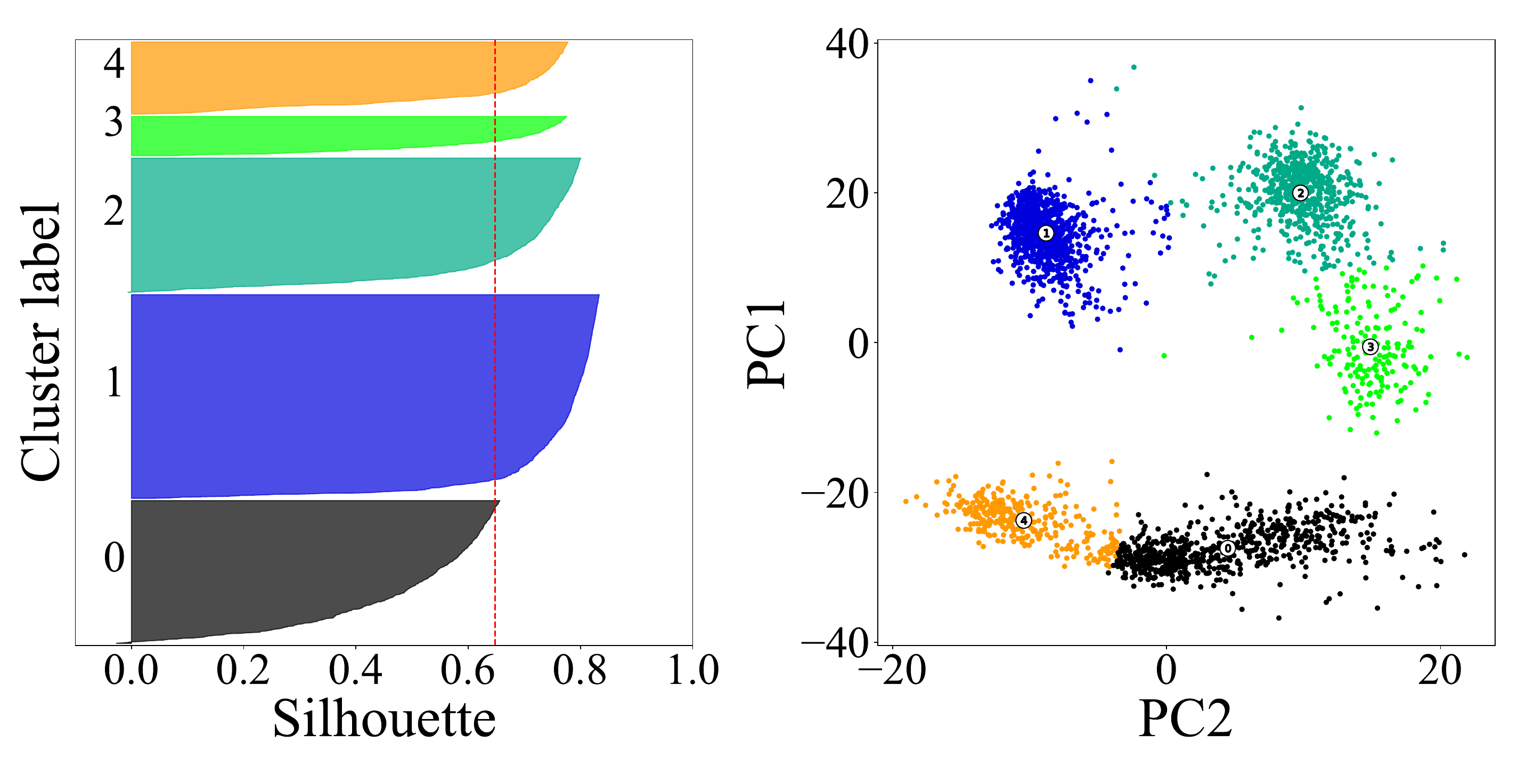}
\caption{$k=5$ with $ASW=0.65$}
\label{fig2.3}
\end{subfigure}
\caption{Average Silhouette Width for clustering results of k-means with $k=3$ to $k=5$ on the data set of \citet{Klein/15a}. Klein’s dataset contains embryonic stem cells measured at \textbf{four} different time points.}
\label{fig2}
\end{figure}
However, visually inspecting the Silhouette plot is only feasible for small data sets,
and hence it is also common to aggregate the values into a single statistic, often referred to
as the Average Silhouette Width (ASW), but also just as ``Silhouette score'' of a clustering:
\begin{align*}
S(X, d, L) = \tfrac{1}{n} \tsum_{i=1}^n s_i(X, d, L)
\;.
\end{align*}
Hence, this is a function that maps a data set, dissimilarity, and cluster labeling to a real number,
and this measure has been shown to satisfy
desirable properties for clustering quality measures (CQM) by \citet{DBLP:conf/nips/Ben-DavidA08}.

A key limitation of the Silhouette is its computational cost. It is easy to see that it requires
all pairwise dissimilarities, and hence takes $O(N^2)$ time to compute -- much more than popular
clustering algorithms such as k-means.

For center-based clustering algorithms such as k-means and k-medoids, a simple approximation to the Silhouette is
possible by using the distance to the cluster centers respectively medoids
$M=\{M_1,\ldots,M_k\}$ instead of the average distance.
For this ``simplified Silhouette'' (which can be computed in $O(N k)$ time,
and which \citet{VanderLaan/03a} called medoid-based Silhouette)
we use the distance to the object cluster center $a_i'$
and the distance to the closest other cluster $b_i'$,
to compute the score $s_i'$ of each sample $i$:
\begin{align*}
s_i'(X, d, M) &= \tfrac{b_i'-a_i'}{\max(a_i',b_i')}
\;\text{, where}\\
a_i' &= \phantom{\min\nolimits_{k\neq l_i}\;{}} d(x_i, M_{l_i})
\\
b_i' &= \min\nolimits_{k\neq l_i}\; d(x_i, M_k)
\;.
\end{align*}
\citet{DBLP:journals/sadm/VendraminCH10} found the simplified Silhouette to perform comparable to the regular Silhouette, and recommend it for large data sets because of the lower computational requirements.

If each point is assigned to the closest cluster center (as in the standard algorithm for k-means, and also optimal for k-medoids and the Silhouette),
we further know that $a_i'\leq b_i'$ and $s_i'\geq 0$,
and hence this can further be simplified to
the \emph{Medoid Silhouette} $\tilde{s}_i$ of sample $i$:
\begin{align*}
\tilde{s}_i(X, d, M) &= \tfrac{d_2(i)-d_1(i)}{d_2(i)}
= 1 - \tfrac{d_1(i)}{d_2(i)}
\;.
\end{align*}
where $d_1$ is the distance to the closest and $d_2$ to the second closest center in~$M$.
For $d_1(i)=d_2(i)=0$, we define $\tilde{s}=1$,
corresponding to adding a negligible small value to $d_2(i)$.
The \emph{Average Medoid Silhouette} (AMS) then is defined as
\begin{align*}
\tilde{S}(X, d, M) = \tfrac{1}{n} \tsum_{i=1}^n \tilde{s}_i(X, d, M)
\;.
\end{align*}
It can easily be seen that the optimum clustering is the (assignment of points to the) optimal set of medoids~$M$
such that we minimize an ``average relative loss``:
\begin{align*}
\argmax_M \tilde{S}(X, d, M) = \argmin_M \mean_i \tfrac{d_1(i)}{d_2(i)}
\end{align*}
i.e., to optimize the relative contrast of the distance to the nearest and the second nearest cluster center.
For clustering around medoids, we impose the restriction $M\subseteq X$;
which has the benefit of not restricting the input data to be numerical,
and allowing non-metric dissimilarity functions~$d$.
This is a key benefit of medoids clustering over, e.g.,
k-means which restricts the input to $X\subseteq\mathbb{R}^d$
in order to be able to compute the cluster means.
On the other hand, this disallows some optimizations particular to the least-squares optimization of k-means.
We argue that for k-means, there exist more meaningful and inexpensive
evaluation measures than the Silhouette,
such as the Variance-Ratio criterion of \citet{Calinski/74a},
as discussed by \citet{Schubert/22b}.

\begin{figure*}[tb]
\begin{subfigure}{.32\linewidth}\centering
\includegraphics[height=.8\linewidth]{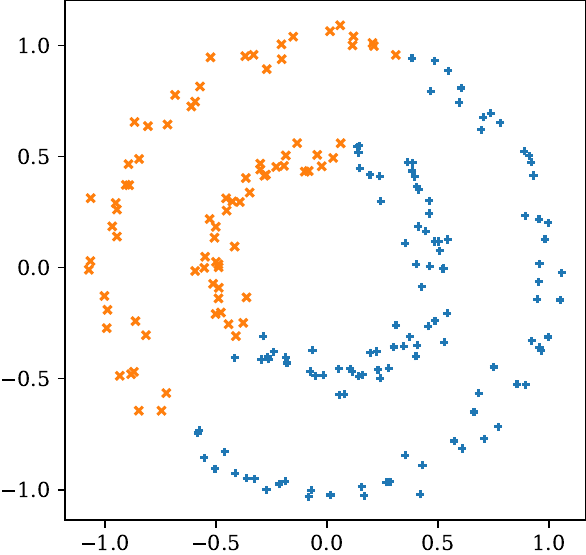}
\caption{Non-convex: nested circles}
\end{subfigure}\hfill%
\begin{subfigure}{.32\linewidth}\centering
\includegraphics[height=.8\linewidth]{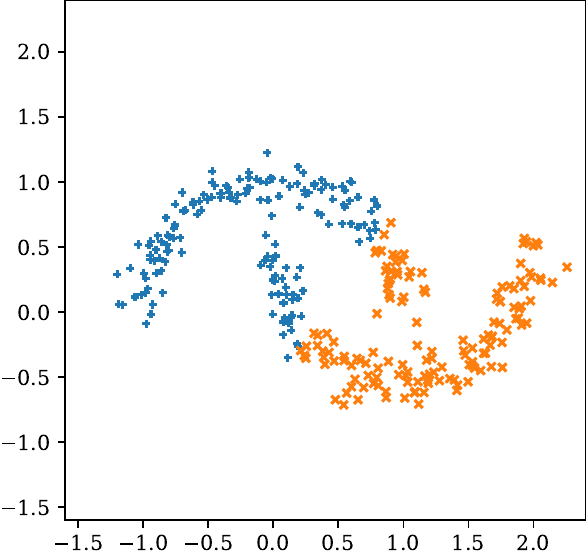}
\caption{Non-convex: two moons}
\end{subfigure}\hfill%
\begin{subfigure}{.32\linewidth}\centering
\includegraphics[height=.8\linewidth]{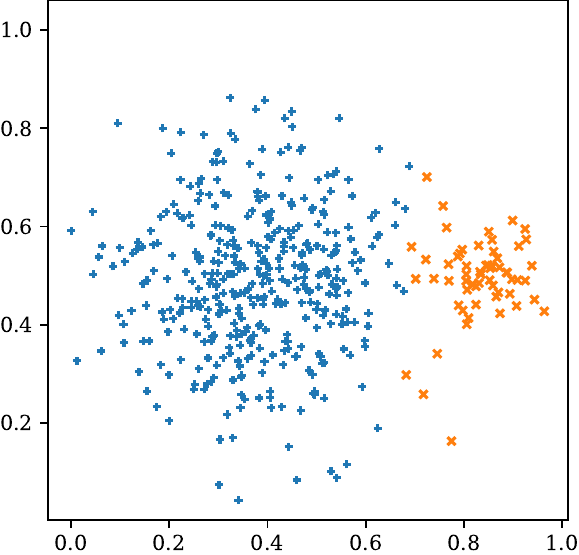}
\caption{Different diameter}
\end{subfigure}
\caption{Toy examples that violate key Silhouette assumptions, and where Silhouette hence prefers undesired solutions.}
\label{fig:silhouette-viloations}
\end{figure*}
From the formulation of both Silhouette and the Medoid Silhouette it should be obvious that the model assumes that each point is best assigned to its ``closest'' cluster, resembling a generalized Voronoi partitioning of the data. Such approaches work best when clusters are spherical and have the same diameter. When clusters are non-convex, but also when clusters have very different size, this assumption may not hold. In such cases, Silhouette may prefer suboptimal solutions.
Figure~\ref{fig:silhouette-viloations} shows some toy data sets that violate these assumptions, and where the optimum Silhouette does not yield the desired solution.
Note that similar limitations exist in many standard clustering algorithms, in particular in k-means and k-medoids. With k-medoids as well as Silhouette clustering it may be possible to use the density connectivity distance of \citet{Beer/23a} that is underlying density-based clustering to improve the results.

\section{Related Work}

The Silhouette~\cite{Rousseeuw/87a} was originally proposed along with
Partitioning Around Medoids (PAM,~\cite{Kaufman/Rousseeuw/87a, Kaufman/Rousseeuw/90c}),
and indeed k-medoids already does a decent job at finding a good solution,
although it optimizes a simpler criterion by minimizing the sum of total deviations.
\citet{VanderLaan/03a} proposed to optimize the Silhouette by substituting
the Silhouette evaluation measure into a simplified variant of the PAM SWAP procedure (calling this PAMSIL).
In each iteration, for each of the $k(N-k)$ possible swaps (exchanging the roles of a medoid and a non-medoid), they compute the full Silhouette in $O(N^2)$,
instead of computing the \emph{change} in the loss used by PAM SWAP.
Algorithm~\ref{alg:pamsil} gives a pseudocode of this procedure.
The complexity of PAMSIL hence increases to $O(k(N-k) N^2)$ per iteration, substantially worse than PAM, which is in $O(k(N-k)^2)$.
Because this yields a very slow clustering method, they also considered the Medoid Silhouette
instead (PAMMEDSIL), which reduced the time complexity to $O(k^2(N-k)N)$ per iteration, still considerably more than the original PAM SWAP iterations.

Schubert and Rousseeuw \cite{DBLP:journals/is/SchubertR21,DBLP:conf/sisap/SchubertR19} recently improved the PAM method,
and their FastPAM approach reduces the cost of PAM by a factor of $O(k)$  by the use of an accumulator array to avoid the innermost loop, making the method $O(N^2)$ per iteration.
In this work, we combine ideas from the FastPAM and the PAMMEDSIL algorithms into an algorithm denoted as FastMSC,
to optimize the Medoid Silhouette with a swap-based local search, but a run time comparable
to FastPAM, i.e., $O(N^2)$ per iteration.
Similar to FasterPAM, we also observe a reduction in the number of iterations at no noticable loss in quality if we eagerly perform swaps, i.e., do a first-descent instead of a steepest-descent optimization in the proposed algorithm FasterMSC.

\begin{algorithm2e}[b]
\caption{PAMSIL SWAP: Optimize Silhouette}
\label{alg:pamsil}
\SetKwBlock{Repeat}{repeat}{}
\SetKw{Break}{break}
$S \gets $ Silhouette sum of the initial solution $M$\;
\Repeat{
    $(S_*, M_*)\gets(0,$null$)$\;
    \ForEach(\tcp*[f]{medoids}){$m_i\in M=\{m_1,\ldots,m_k\}$} {
        \ForEach(\tcp*[f]{non-meds}){$x_j\notin\{m_1,\ldots,m_k\}$} {
            $(S,M') \gets (0, M \setminus \{m_i\} \cup \{x_j\})$\;
            \ForEach(){$x_o\in X=\{x_1,\ldots,x_n\}$} {
                $S \gets S + s_o(X,d,M')$ \tcp*{  Silhouette}
            }
            \lIf%
            {$S > S_*$} {
        		$(S_*, M_*)\gets( S,M')$%
        	}
        }
    }
    \lIf{$S_*\geq S$}{\Break}
    $(S,M) \gets (S_*,M_*)$\tcp*[r]{perform swap}
}
\Return {$(S / N,M)$}\;
\end{algorithm2e}

We will first perform a theoretical analysis of the properties of the Medoid Silhouette, to show that it is worth exploring as an alternative to the original Silhouette, then introduce the new algorithm.

\section{Axiomatic Characterization of Medoid Clustering}
To characterize the Medoid Silhouette,
we follow the axiomatic approach of \citet{DBLP:conf/nips/Ben-DavidA08}, i.e.,
we prove the value of the Average Medoid Silhouette~(AMS) as a clustering quality measure~(CQM) by proving that it satisfies some interesting properties.
\citet{Kleinberg/Jon/02a} defined three axioms for clustering functions
and argued that no clustering algorithm can satisfy these desirable properties at the same time,
as they contradict.
Because of this, \citet{DBLP:conf/nips/Ben-DavidA08}
weaken the original Consistency Axiom and extract four axioms for clustering quality measures:
\emph{Scale Invariance} and \emph{Richness} are defined analogously to the Kleinberg Axioms. We first redefine the CQM axioms~\cite{DBLP:conf/nips/Ben-DavidA08} to match the notion of medoid-based clustering
before we check them for the Average Medoid Silhouette.
\begin{definition}[Relation $\sim_M$]
\label{d1}
For given data points $X = \{x_1,\ldots,x_n\}$ with a set of $k$~medoids $M = \{m_1,\ldots,m_k\}$ and a dissimilarity~$d$, we write $x_i \sim_M x_{i'}$ whenever $x_i$ and~$x_{i'}$ have the same nearest medoid $\nearest(i) \subseteq M$, otherwise $x_i \not\sim_M x_{i'}$.
\end{definition}
The $\sim_M$ relation encodes the partitioning of the data set given by the medoids~$M$, and is transitive and symmetric.
\begin{definition}[M-consistent]
\label{d2}
Dissimilarity $d'$ is an \emph{M-consistent} variant of $d$, if $d'(x_i, x_{i'}) \leq d(x_i, x_{i'})$ for $x_i \sim_M x_{i'}$, and $d'(x_i, x_{i'}) \geq d(x_i, x_{i'})$ for $x_i \not\sim_M x_{i'}$.
\end{definition}
Meaning that a consistent variant reduces the distances within clusters and increases the distances between clusters.

\begin{definition}[Isomorphism of Medoids]
\label{d3}
Two sets of medoids $M, M' \subseteq X$ with a distance function $d$ over~$X$,
are \emph{isomorphic}, if there exists a distance-preserving isomorphism $\phi : X \to X$,
such that for all $x_i, x_{i'} \in X$, $x_i \sim_M x_{i'}$ if and only if $\phi(x_i) \sim_{M'} \phi(x_{i'})$.
\end{definition}

\begin{axiom}[Scale Invariance]
\label{a1}
A medoid-based clustering quality measure~$f$ satisfies \emph{scale invariance} if for every set of medoids $M \subseteq X$ for $d$, and every positive~$\lambda$, $f(X, d, M) = f(X, \lambda d, M)$.
\end{axiom}
Meaning that if we scale all distances with the same constant $\lambda>0$, the quality does not change.

\begin{axiom}[Consistency]
\label{a2}
A medoid-based clustering quality measure~$f$ satisfies \emph{consistency} if for a set of medoids $M \subseteq X$ for $d$, whenever $d'$ is an \mbox{M-consistent} variant of~$d$, then $f(X, d', M) \geq f(X, d, M)$. 
\end{axiom}
Meaning that if we reduce the distances within the same cluster and increase the distances between different clusters, the quality does not decrease.

\begin{axiom}[Richness]
\label{a3}
A medoid-based clustering quality measure~$f$ satisfies \emph{richness} if for each set of medoids $M \subseteq X$, there exists a distance function $d$ over~$X$ such that $M = \argmax_{M'}f(X, d, M')$.
\end{axiom}
Meaning that for every possible set of medoids, there exists a distance for which this solution is optimal.

\begin{axiom}[Isomorphism Invariance]
\label{a4}
A medoid-based clustering quality measure~$f$ is \emph{isomorphism-invariant} if for all
sets of medoids $M, M' \subseteq X$ with distance $d$ over~$X$ where $M$ and $M'$ are isomorphic, $f(X, d, M) = f(X, d, M')$. 
\end{axiom}
Meaning that transformations (including data permutations) which preserve distances do not affect clustering quality.

Batool and Hennig~\cite{Batool/Hennig/21a} prove that the Average Silhouette Width~(ASW)
satisfies the original CQM axioms.
We now prove the four adapted axioms for the Average Medoid Silhouette~(AMS).
\begin{theorem}
The AMS is a \emph{scale invariant} clustering quality measure.
\end{theorem}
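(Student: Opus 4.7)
The plan is to unfold the definition of the Average Medoid Silhouette and exploit the fact that scaling by a positive constant preserves the order of distances and cancels in the ratio $d_1(i)/d_2(i)$. Specifically, I would fix $\lambda > 0$ and an arbitrary sample $x_i \in X$, and write $d_1^\lambda(i)$ and $d_2^\lambda(i)$ for the distances to the nearest and second nearest medoid under the scaled dissimilarity $\lambda d$.

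First I would observe that for any $m \in M$ we have $(\lambda d)(x_i, m) = \lambda \, d(x_i, m)$, and since $\lambda > 0$, multiplication by $\lambda$ is strictly monotone. Hence the ordering of the medoids by distance to $x_i$ is unchanged, the nearest and second nearest medoid are the same under $d$ and $\lambda d$, and we obtain $d_1^\lambda(i) = \lambda \, d_1(i)$ and $d_2^\lambda(i) = \lambda \, d_2(i)$. Substituting into the definition gives
\begin{align*}
\tilde{s}_i(X, \lambda d, M) = 1 - \frac{d_1^\lambda(i)}{d_2^\lambda(i)} = 1 - \frac{\lambda \, d_1(i)}{\lambda \, d_2(i)} = 1 - \frac{d_1(i)}{d_2(i)} = \tilde{s}_i(X, d, M),
\end{align*}
so the per-point Medoid Silhouette is invariant. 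Averaging over $i = 1,\ldots,n$ then yields $\tilde{S}(X, \lambda d, M) = \tilde{S}(X, d, M)$, which is Axiom~\ref{a1}.

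The only non-routine point is the degenerate case $d_1(i) = d_2(i) = 0$, where $\tilde{s}_i$ is defined to be $1$ by convention. I would handle it separately by noting that $\lambda \cdot 0 = 0$ for $\lambda > 0$, so $d_1^\lambda(i) = d_2^\lambda(i) = 0$ as well, and the same convention applies, giving $\tilde{s}_i = 1$ in both cases. I do not anticipate any real obstacle here: the strict positivity of $\lambda$ is what makes everything work (it preserves both the ordering of medoids and the degenerate-zero case), and the ratio structure of $\tilde{s}_i$ makes the multiplicative factor cancel automatically.
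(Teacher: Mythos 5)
Your proof is correct and takes essentially the same route as the paper: unfold the definition and cancel $\lambda$ in the ratio $d_1(i)/d_2(i)$. You are in fact slightly more careful than the paper, which leaves implicit both that positive scaling preserves the identity of the nearest and second-nearest medoids and that the degenerate case $d_1(i)=d_2(i)=0$ is unaffected.
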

\begin{proof}
If we replace $d$ with $\lambda d$, both $d_1(i)$ and $d_2(i)$ are multiplied by $\lambda$, and the term will cancel out. Hence, $\tilde{s}_i$ does not change for any $i$:
\begin{align*}
\tilde{S}(X, \lambda d, M) &= \tfrac{1}{n} \tsum_{i=1}^n \tilde{s}_i(X, \lambda d, M) \\
&= \tfrac{1}{n} \tsum_{i=1}^n 1-\tfrac{\lambda d_1(i)}{\lambda d_2(i)} \\
&= \tfrac{1}{n} \tsum_{i=1}^n 1-\tfrac{d_1(i)}{ d_2(i)} \\
&= \tfrac{1}{n} \tsum_{i=1}^n \tilde{s}_i(X, d, M)
= \tilde{S}(X, d, M)
\;.
\end{align*}
\end{proof}
\begin{theorem}
The AMS is a \emph{consistent} clustering quality measure.
\end{theorem}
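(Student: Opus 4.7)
The plan is to establish the result pointwise: for each sample $x_i$, I will show that its Medoid Silhouette contribution $\tilde{s}_i$ cannot decrease when we replace $d$ by an M-consistent variant $d'$. Since $\tilde{S}$ is just the average of the $\tilde{s}_i$, pointwise monotonicity immediately yields the desired inequality $\tilde{S}(X, d', M) \geq \tilde{S}(X, d, M)$.

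The central step I would do first is to argue that the nearest medoid of each $x_i$ is preserved when switching from $d$ to $d'$. Let $m_{\nearest(i)}$ denote the medoid that $x_i$ is originally closest to, so by Definition~\ref{d1} we have $x_i \sim_M m_{\nearest(i)}$ while $x_i \not\sim_M m_j$ for every other $j$. By M-consistency (Definition~\ref{d2}), $d'(x_i, m_{\nearest(i)}) \leq d(x_i, m_{\nearest(i)}) = d_1(i)$, whereas for every competing medoid $d'(x_i, m_j) \geq d(x_i, m_j) \geq d_1(i) \geq d'(x_i, m_{\nearest(i)})$; so $m_{\nearest(i)}$ remains the nearest under $d'$, giving $d'_1(i) \leq d_1(i)$. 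For the second-nearest distance I would take the minimum of $d'(x_i, m_j)$ over $j \neq \nearest(i)$; each such term weakly increases under consistency, so passing to the minimum yields $d'_2(i) \geq d_2(i)$.

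Combining $d'_1(i) \leq d_1(i)$ and $d'_2(i) \geq d_2(i)$ gives $d'_1(i)/d'_2(i) \leq d_1(i)/d_2(i)$, hence $\tilde{s}_i(X, d', M) \geq \tilde{s}_i(X, d, M)$, and averaging over all $i$ closes the argument. The main obstacle I anticipate is the boundary handling: points with $d_2(i) = 0$ already saturate at $\tilde{s}_i = 1$ by the convention introduced earlier and cannot decrease, and ties in the choice of nearest medoid require a tie-breaking rule consistent with $\sim_M$. Once such a convention is fixed, the weak inequalities above propagate through the chain unchanged, so the argument does not rely on strict uniqueness of $m_{\nearest(i)}$.
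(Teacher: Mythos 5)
Your proof is correct and follows essentially the same route as the paper: establish $d'_1(i) \le d_1(i)$ and $d'_2(i) \ge d_2(i)$ for every point, deduce $d'_1(i)/d'_2(i) \le d_1(i)/d_2(i)$, and conclude that each $\tilde{s}_i$, hence the average, cannot decrease. Your explicit verification that the nearest medoid of each point is preserved under the M-consistent change (and your remark on the $d_2(i)=0$ convention) fills in a step the paper's proof leaves implicit, so your version is if anything slightly more complete.
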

\begin{proof}
Let dissimilarity $d'$ be an M-consistent variant of $d$.
By Definition~\ref{d2}: $d'(x_i, x_{i'} ) \leq d(x_i, x_{i'} )$ for all $x_i \sim_{M} x_{i'}$, and
$\min_{x_i \not\sim_{M} x_{i'}}d'(x_i, x_{i'} ) \geq \min_{x_i \not\sim_{M} x_{i'}}d(x_i, x_{i'} )$.
This implies for all $i \in \mathbb{N}$:
$d'_1(i) \leq d_1(i), d'_2(i) \geq d_2(i)$
and it follows:
\begin{align*}
\tfrac{d_1(i)}{ d_2(i)} - \tfrac{ d'_1(i)}{ d'_2(i)} &\geq 0
\quad\Leftrightarrow\quad  %
1-\tfrac{ d'_1(i)}{ d'_2(i)} - \big(1- \tfrac{ d_1(i)}{ d_2(i)}\big) \geq 0
\end{align*}
which is equivalent to $\forall_i \; \tilde{s}_i(X, d', M) \geq \tilde{s}_i(X, d, M)$,
hence $\tilde{S}(X, d', M) \geq \tilde{S}(X, d, M)$, i.e., AMS is a consistent clustering quality measure.
\end{proof}
\begin{theorem}
The AMS is a \emph{rich} clustering quality measure.
\end{theorem}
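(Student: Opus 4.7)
The plan is to establish richness by exhibiting, for any prescribed set of medoids $M = \{m_1, \ldots, m_k\} \subseteq X$, a distance function $d$ under which $M$ attains the maximum possible value of the AMS. The construction is the medoid analogue of the standard richness proof for the ordinary Silhouette: collapse each intended cluster to a single equivalence class and keep the $k$ classes uniformly separated.

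First I would fix an arbitrary partition of $X \setminus M$ into $k$ (possibly empty) groups, attach group $j$ to $m_j$, and call the union $C_j$. Then I would define $d(x, x') = 0$ whenever $x$ and $x'$ lie in the same $C_j$, and $d(x, x') = 1$ otherwise; this is a valid non-negative, symmetric dissimilarity on $X$. Under this $d$, every $x_i \in C_{l_i}$ satisfies $d(x_i, m_{l_i}) = 0$ and $d(x_i, m_j) = 1$ for $j \neq l_i$, so $d_1(i) = 0$, $d_2(i) = 1$, and hence $\tilde{s}_i(X, d, M) = 1$ for every $i$. Averaging yields $\tilde{S}(X, d, M) = 1$.

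To conclude that $M$ is a maximizer, I would invoke the universal upper bound $\tilde{s}_i \leq 1$, which follows from $d_1(i) \leq d_2(i)$ by definition of nearest and second-nearest medoid (together with the paper's convention $\tilde{s} = 1$ when $d_1 = d_2 = 0$). This gives $\tilde{S}(X, d, M') \leq 1$ for every candidate $M' \subseteq X$ of size $k$, so the constructed $M$ meets the bound and therefore lies in $\argmax_{M'} \tilde{S}(X, d, M')$.

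I do not foresee a significant obstacle. The construction is explicit, and the possibly delicate boundary case $d_1 = d_2 = 0$ never arises here as long as $k \geq 2$ (the regime for which AMS is meaningful), since the second-nearest medoid always lies in a different $C_j$ and thus contributes distance $1$. A mild subtlety is that the constructed $d$ is only a pseudo-metric (distinct points in the same $C_j$ become indistinguishable), but the paper only requires a dissimilarity $X \times X \to \mathbb{R}$ and imposes no identity of indiscernibles, so this is harmless.
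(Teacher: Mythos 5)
Your construction does achieve $\tilde{S}(X,d,M)=1$, and the bound $\tilde{s}_i\le 1$ is correct, but the argument only establishes that $M$ \emph{lies in} the set of maximizers. Axiom~\ref{a3} asks for $M = \argmax_{M'} f(X,d,M')$, i.e., that $M$ is the \emph{unique} maximizer, and the paper's proof accordingly verifies the strict inequality $\tilde{S}(X,d,M') < 1$ for every $M'\neq M$. Under your two-valued dissimilarity this uniqueness genuinely fails: whenever some class $C_j$ contains a point $x\neq m_j$, the set $M''=(M\setminus\{m_j\})\cup\{x\}$ still has exactly one representative per class, so every point again has $d_1=0$ and $d_2=1$ and $\tilde{S}(X,d,M'')=1$. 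The argmax set therefore contains many medoid sets besides $M$, and your $d$ does not witness richness in the required sense. This is not a presentational slip but a defect of the construction itself: all points of a class are mutually indistinguishable under your pseudo-metric and hence interchangeable as medoids.

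The paper sidesteps this with a deliberately asymmetric construction: $d(x_i,x_j)=0$ only when $i=j$, or when one of the two points is the distinguished medoid $m_1$ and the other is a non-medoid; all other distances are $1$. One still gets $\tilde{S}(X,d,M)=1$ (every non-medoid clings to $m_1$ at distance $0$, and every point has $d_2=1$), but each $m_j$ with $j\ge 2$ is now at distance $1$ from \emph{everything} else, so any $M'$ omitting such an $m_j$ gives that point $d_1=d_2=1$, hence $\tilde{s}_{m_j}=0$ and $\tilde{S}(X,d,M')<1$. If you want to salvage your cluster-based picture you would have to break the ties, e.g., by placing non-medoids of the same class at some $\epsilon>0$ from one another while keeping them at distance $0$ only to their designated medoid; but even then classes of size two remain tied under a swap, so the hub-style construction is the cleaner route.
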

\begin{proof}
We can simply encode the desired set of medoids $M$ in our dissimilarity~$d$.
We define $d(x_i,x_j)$ such that it is~0 if trivially $i=j$,
or if $x_i$ or $x_j$ is the first medoid $m_1$ and the other is not a medoid itself.
Otherwise, let the distance be~1.

For $M$ we then obtain $\tilde{S}(X, d, M)=1$,
because $d_1(i)=0$ for all objects, as either $x_i$ is a medoid itself, or can
be assigned to the first medoid~$m_1$. This is the maximum possible Average Medoid Silhouette.
Let $M'\neq M$ be any other set of medoids. Then there exists at least one missing
$x_i\in M\setminus M'$. For this object $\tilde{s}_i(X, d, M)=0$ (as its distance to all other objects is 1, and it is not in~$M'$), and hence $\tilde{S}(X, d, M')<1=\tilde{S}(X, d, M)$.
That is, for any set of medoids (or similar, for any partitioning of the data set), there exists a dissimilarity function that yields the desired result.
\end{proof}
\begin{theorem}
The AMS is an \emph{isomorphism-invariant} clustering quality measure.
\end{theorem}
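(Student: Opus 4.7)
The plan is to prove the theorem by establishing, for every $i$, the pointwise identity $\tilde s_{x_i}(X,d,M)=\tilde s_{\phi(x_i)}(X,d,M')$ and then re-indexing the average over $i$ using the bijectivity of $\phi$ on $X$. Since $\tilde s_i$ depends on its first argument only through the nearest and second-nearest distances to the medoid set, establishing the pointwise identity reduces to showing, for each $i$, the multiset equality $\{d(x_i,m):m\in M\}=\{d(\phi(x_i),m'):m'\in M'\}$, from which the two smallest entries on both sides agree and hence so do $d_1$ and $d_2$ at the corresponding points.

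To obtain that multiset equality I would first argue that the isomorphism sends medoids to medoids, $\phi(M)=M'$. For each $m\in M$ the cluster $C_m=\{x:x\sim_M m\}$ contains $m$ as its unique $M$-element; by Definition~\ref{d3} the image $\phi(C_m)$ is a $\sim_{M'}$-block, which likewise contains exactly one $M'$-element; and distance preservation together with the characterization of a medoid as the unique nearest $M$-representative of its own block lets me identify $\phi(m)$ with that $M'$-medoid, so that $\phi(M)\subseteq M'$ and by cardinality $\phi(M)=M'$. Once this is in hand, distance preservation directly yields $\{d(x_i,m):m\in M\}=\{d(\phi(x_i),\phi(m)):m\in M\}=\{d(\phi(x_i),m'):m'\in M'\}$, whence $d_1$ and $d_2$ at $x_i$ under $M$ agree with $d_1$ and $d_2$ at $\phi(x_i)$ under $M'$; substituting into $\tilde s_i=1-d_1(i)/d_2(i)$ gives the pointwise identity, and summing over $i$ finishes the proof.

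The principal obstacle is the step $\phi(M)=M'$, since Definition~\ref{d3} constrains $\phi$ only through the equivalence relations $\sim_M$ and $\sim_{M'}$ and does not \emph{a priori} pin down $\phi$'s action on the medoid sets themselves. The argument must therefore exploit the combination of distance preservation with the fact that within a given block the medoid is the distinguished element minimizing distances to the block members among $M$, and extra care will be needed in degenerate cases where several candidate representatives of a block are equidistant to its members, so that the block's medoid is not uniquely forced by distances alone. Once this identification is settled, the remaining equalities are routine substitutions into the formula for $\tilde s$.
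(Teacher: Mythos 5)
Your overall strategy---establish the pointwise identity $\tilde s_{x_i}(X,d,M)=\tilde s_{\phi(x_i)}(X,d,M')$ via preservation of $d_1$ and $d_2$, then re-index the average using bijectivity of $\phi$---is exactly the route the paper takes; the paper's own proof simply asserts that the first- and second-nearest distances coincide and concludes. The difference is that you correctly isolate the load-bearing step, $\phi(M)=M'$, which the paper passes over in silence.

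That step is where the argument genuinely breaks, and not only in degenerate tie cases. Definition~\ref{d3} constrains $\phi$ only through the partitions $\sim_M$ and $\sim_{M'}$, and a partition does not determine which element of a block is the medoid. Concretely, take $X=\{a,b,c,e,f\}$ with $d(a,b)=1$, $d(a,c)=2$, $d(b,c)=3$, $d(e,f)=1$, and all cross distances equal to $100$. Both $M=\{a,e\}$ and $M'=\{b,e\}$ induce the same partition $\{a,b,c\},\{e,f\}$, so the identity map is a distance-preserving isomorphism witnessing that $M$ and $M'$ are isomorphic in the sense of Definition~\ref{d3}; yet $\phi(M)=M\neq M'$, the distance from $c$ to its nearest medoid changes from $2$ to $3$, and indeed $\tilde{S}(X,d,M)=0.992\neq 0.990=\tilde{S}(X,d,M')$. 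So $\phi(M)=M'$ cannot be derived from the stated hypotheses, and the theorem even fails under a literal reading of the definition. Your proposed rescue---characterizing the medoid as the distinguished element of its block minimizing distances to the block members---does not help: in the example $a$ and $b$ are each the unique medoid of their block under $M$ and $M'$ respectively, they are \emph{not} equidistant to the block, and nothing in the definition relates $\phi(a)$ to $b$. The repair is to strengthen Definition~\ref{d3} so that the isomorphism is additionally required to carry the medoid set itself, $\phi(M)=M'$ (the natural reading of ``isomorphism of medoids,'' and what the paper implicitly uses); with that hypothesis your multiset identity $\{d(x_i,m):m\in M\}=\{d(\phi(x_i),m'):m'\in M'\}$ is immediate from distance preservation and the rest of your argument goes through verbatim.
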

\begin{proof}
Let $M, M' \subseteq X$ be two sets of medoids with a distance function $d$ over~$X$. If they are isomorphic, there exists a distance-preserving isomorphism $\phi : X \to X$,
such that for all $x_i, x_{i'} \in X$, $x_i \sim_M x_{i'}$ if and only if $\phi(x_i) \sim_{M'} \phi(x_{i'})$. 

Therefore, every $d_1(i)$ is equal $d_1(i')$ and every $d_q(i)$ is equal $d_2(i')$, and hence $\tilde{S}(X, d, M) = \tilde{S}(X, d, M')$.
\end{proof}

\FloatBarrier
\section{Direct Optimization of Medoid Silhouette}
PAMSIL~\cite{VanderLaan/03a} is a modification of PAM~\cite{Kaufman/Rousseeuw/87a, Kaufman/Rousseeuw/90c}
to optimize the ASW.
For PAMSIL, \citet{VanderLaan/03a} adjust the SWAP phase of PAM by always performing the SWAP
that provides the best increase in the ASW.
When no further improvement is found, the algorithm terminates and a (local) maximum of the ASW has been achieved.
In contrast to k-means, this procedure does not alternate between two steps, but every step reduces the loss function, or the algorithm stops. Given that there only exist $\choose{k}{n}$ possible states, only a finite number of improvements is possible.
However, where in the original PAM algorithm we efficiently compute only the change in its loss
(in $O(N-k)$ time for each of $(N-k)k$ swap candidates),
PAMSIL computes the entire ASW in $O(N^2)$ for every candidate,
and hence the run time per iteration increases to $O(k(N-k)N^2)$.
For a small $k$, this yields a run time that is cubic in the number of objects $N$,
and the algorithm may need several iterations to converge.
A pseudocode of PAMSIL is given in Algorithm~\ref{alg:pamsil}.

\subsection{Naive Medoid Silhouette Clustering}
PAMMEDSIL~\cite{VanderLaan/03a} uses the Average Medoid Silhouette (AMS) instead,
which can be evaluated in only $O(Nk)$ time. This yields a SWAP run time
of $O(k^2(N-k)N)$ (now only quadratic in~$N$ for small $k\ll N$, but for practical applications the quadratic dependency on $k$ is also noticeable).
A pseudocode of PAMMEDSIL is almost identical to that of PAMSIL shown
in Algorithm~\ref{alg:pamsil}, but using the medoid-based Silhouette scores~$s_o'$ instead of the classic Silhouette~$s_o$.

As Schubert and Rousseeuw \cite{DBLP:journals/is/SchubertR21,DBLP:conf/sisap/SchubertR19} were able to reduce the run time of PAM
to $O(N^2)$ per iteration,
we will now modify the PAMMEDSIL approach accordingly to obtain a similar improvement. We also apply the idea of eager swapping~\cite{DBLP:journals/is/SchubertR21}, i.e., we perform greedy first-descent optimization instead of searching for the steepest-descent.

\subsection{Finding the Best Swap}
\label{sec52}
We first bring PAMMEDSIL up to par with regular PAM.
The trick introduced with PAM is to compute the change in loss instead of recomputing the loss,
which can be done in $O(N-k)$ instead of $O(k(N-k))$ time if we store the distance to the
nearest and second centers, as the latter allows us to compute the change if the current nearest center is removed efficiently.
In the following, we omit the constant parameters $X$ and $d$ for brevity.
We denote the previously nearest medoid of $i$ as $\nearest(i)$,
and $d_1(i)$ is the (cached) distance to it. We similarly define $\second(i)$, $d_2(i)$, and $d_3(i)$
with respect to the second and third nearest medoid. We briefly use $d_1'$ and $d_2'$ to denote
the new distances for a candidate swap.
For the Medoid Silhouette, we can compute the change when swapping medoid
$m_i\in\{m_1,\ldots,m_k\}$ with non-medoid $x_j\notin\{m_1,\ldots,m_k\}$ as follows:
\begin{align*}
\Delta\MS &= \tfrac{1}{n} \tsum_{o=1}^n \Delta\ms_o(M,m_i,x_j)
\\
\Delta\ms_o(M,m_i,x_j) &=  \ms_o(M \setminus \{m_i\} \cup \{x_j\}) - \ms_o(M)
\\
&= 1 - \tfrac{d_1'(i)}{d_2'(i)} - \left(1 - \tfrac{d_1(i)}{d_2(i)}\right)
\\ &=
\tfrac{d_1(i)}{d_2(i)} - \tfrac{d_1'(i)}{d_2'(i)}
\;.
\end{align*}
Clearly, we only need the distances to the closest and second closest center, before \emph{and} after the swap.
Instead of recomputing these values by searching, we exploit that only one medoid can change in a swap.
We can determine the new values of $d_1'$ and $d_2'$
using a constant set of cached values only, and hence save a factor of $O(k)$ on the run time over the naive approach using a loop to check all medoids.

In the PAM algorithm (where the change would be simply $d_1'-d_1$),
the distance to the \emph{second} nearest is cached in order to compute the loss change
if the current medoid is removed, without having to consider all $k-1$ other medoids:
the point is then either assigned to the new medoid, or its former second closest.
To efficiently compute the change in Medoid Silhouette, we have to take this one step further,
and we additionally have to cache the identity of the second closest center (denoted $\second$)
and the distance to the \emph{third} closest center (denoted~$d_3$).
Because we still only change one medoid at a time, both the closest and the second closest must be either the new medoid, or any of the previous three closest (three in case one of them was removed). Depending on which medoid is swapped, we need to distinguish cases.

\begin{algorithm2e}[b]
\caption{Change in Medoid Silhouette, $\Delta\ms_o(M,m_i,x_j)$}
\label{alg:change}
\If%
{$m_i=\nearest(o)$}{
  \tcp{Nearest medoid is replaced with xj:}
  \lIf%
  {$d(o,j)< d_2(o)$}{
    \Return \tabto*{48mm} $\frac{d_1(o)}{d_2(o)}-\frac{d(o,j)}{d_2(o)}$
  }
  \tcp{Nearest removed, xj new second nearest:}
  \lIf%
  {$d(o,j)< d_3(o)$}{
    \Return \tabto*{48mm} $\frac{d_1(o)}{d_2(o)}-\frac{d_2(o)}{d(o,j)}$
  }
  \tcp{Nearest removed, xj is farther than third:}
  \lElse {
    \Return \tabto*{48mm} $\frac{d_1(o)}{d_2(o)}-\frac{d_2(o)}{d_3(o)}$
    \label{alg:change:rem1}
  }
}
\ElseIf%
{$m_i=\second(o)$}{
  \tcp{Second nearest is replaced, xj closer:}
  \lIf%
  {$d(o,j)< d_1(o)$}{
    \Return \tabto*{48mm} $\frac{d_1(o)}{d_2(o)}-\frac{d(o,j)}{d_1(o)}$
  }
  \tcp{Second nearest is replaced by xj:}
  \lIf%
  {$d(o,j)< d_3(o)$}{
    \Return \tabto*{48mm} $\frac{d_1(o)}{d_2(o)}-\frac{d_1(o)}{d(o,j)}$
  }
  \tcp{Second nearest is replaced, xj far:}
  \lElse {
    \Return \tabto*{48mm} $\frac{d_1(o)}{d_2(o)}-\frac{d_1(o)}{d_3(o)}$
    \label{alg:change:rem2}
  }
}
\Else{
  \tcp{xj new closest:}
  \lIf%
  {$d(o,j)< d_1(o)$}{
    \Return \tabto*{48mm} $\frac{d_1(o)}{d_2(o)}-\frac{d(o,j)}{d_1(o)}$
    \label{alg:change:add1}
  }
  \tcp{xj new second closest:}
  \lIf%
  {$d(o,j)< d_2(o)$}{
    \Return \tabto*{48mm} $\frac{d_1(o)}{d_2(o)}-\frac{d_1(o)}{d(o,j)}$
    \label{alg:change:add2}
  }
  \tcp{xj replaced some far medoid:}
  \lElse {
    \Return \tabto*{48mm} 0
  }
}
\end{algorithm2e}

The change in Medoid Silhouette is then computed roughly as follows:
(1) If the new medoid is the new closest, the second closest is either the former nearest, or the second nearest (if the first was replaced).
(2) If the new medoid is the new second closest, the closest either remains the former nearest, or the second
nearest (if the first was replaced).
(3) If the new medoid is neither, we may still have replaced the closest or second closest;
in which case the distance to the third nearest is necessary to compute the new Silhouette.
Putting all the cases (and sub-cases) into one equation becomes a bit messy, and hence we
opt to use the pseudocode in Algorithm~\ref{alg:change} instead of an equivalent mathematical notation.
Note that the first term is always the same (the previous loss), except for the last case,
where it canceled out via $0=\frac{d_1(o)}{d_2(o)}-\frac{d_1(o)}{d_2(o)}$.
As this is a frequent case, it is beneficial to not have further computations here
(and hence, to compute the change instead of computing the loss).
Clearly, this algorithm runs in $O(1)$ if $n_1(o)$, $n_2(o)$, $d_1(o)$, $d_2(o)$, and $d_3(o)$
are known. We also only compute $d(o,j)$ once.
Modifying PAMMEDSIL (Algorithm~\ref{alg:pamsil}) to use an incremental computation yields a run time of $O(k(N-k)N)$ to find the best swap, i.e., already $O(k)$ times faster.
This integrated a key idea of PAM into this algorithm, but we can further improve this approach with ideas from FastPAM.

\subsection{Fast Medoid Silhouette Clustering}
We now integrate an acceleration added to the PAM algorithm by
Schubert and Rousseeuw~\cite{DBLP:conf/sisap/SchubertR19,DBLP:journals/is/SchubertR21},
that exploits redundancy among the loop over the $k$ medoids to replace.
For this, the loss change $\Delta\MS(m_i,x_j)$ is split into multiple components:
(1)~the change by removing medoid $m_i$ (without choosing a replacement, as if we assigned points to their second closest instead),
(2)~the change by adding $x_j$ as an additional medoid (without replacing any of the currently selected), and
(3)~a correction term if both operations occur at the same time.
The first components can be computed in $O(N)$, the second in $O(N(N-k))$, and the last
factor is~0 if the removed medoid is neither of the two closest, and hence is in $O(N)$.
After performing a swap it becomes necessary to update the caches, which may involve finding the third nearest medoid and takes $O(k(N-k))$ time in the worst case.
This then yields an algorithm that finds the best swap in $O((N-k)(N+k))$ -- or less formal in $O(N^2)$, about $k$ times faster for small $k$ than the previous.

First, the changes if we removed medoids $m_i\in M$, and corresponding to lines~\ref{alg:change:rem1}~and~\ref{alg:change:rem2} in Algorithm~\ref{alg:change} are computed for each $i$ as:
\begin{equation}
\Delta\MS^{-m_i} = \sum_{\nearest(o)=i} \tfrac{d_1(o)}{d_2(o)}-\tfrac{d_2(o)}{d_3(o)}
+ \sum_{\second(o)=i} \tfrac{d_1(o)}{d_2(o)}-\tfrac{d_1(o)}{d_3(o)}
\;. \label{eq:removal-mi}
\end{equation}
By iterating over all points $o$ and adding to the accumulators for $\nearest(o)$ and $\second(o)$, we can compute these terms in $O(N)$
for all medoids $m_i$ in one pass.

Secondly, for any non-medoid $x_j$, we can
compute the change when adding this point,
corresponding to the
lines~\ref{alg:change:add1}~and~\ref{alg:change:add2} in Algorithm~\ref{alg:change},
as:
\begin{equation}
\Delta\MS^{+x_j} = \sum_{o=1}^N \begin{cases}
    \frac{d_1(o)}{d_2(o)}-\frac{d(o,j)}{d_1(o)}      & \text{if }d(o,j)< d_1(o) \\ 
    \frac{d_1(o)}{d_2(o)}-\frac{d_1(o)}{d(o,j)}     & \text{elif }d(o,j)< d_2(o) \\
    0     & \text{otherwise} 
    \end{cases}
\;. \label{eq:add-xj}
\end{equation}

Using these two terms, we can derive the remaining
correction term by only considering the cases where $\nearest(o)=i$ or $\second(o)=i$, including
cancel-out terms for summands in $\Delta\MS^{-m_i}$ and $\Delta\MS^{+x_j}$ where necessary:
\begin{align*}
    &\Delta\MS(m_i,x_j) =
    \sum_{o=1}^N\Delta\ms_o(M,m_i,x_j)
    \\
    &=\Delta\MS^{-m_i} + \Delta\MS^{+x_j} \notag\\ 
    &+\! \sum_{\nearest(o)=i} \begin{cases}
    \frac{d(o,j)}{d_1(o)}{+}\frac{d_2(o)}{d_3(o)}{-}\frac{d_1(o)+d(o,j)}{d_2(o)} &\! \text{if }d(o,j)< d_1(o)\\ 
    \frac{d_1(o)}{d(o,j)}{+}\frac{d_2(o)}{d_3(o)}{-}\frac{d_1(o)+d(o,j)}{d_2(o)} &\! \text{elif }d(o,j)< d_2(o) \\
    \frac{d_2(o)}{d_3(o)}-\frac{d_2(o)}{d(o,j)}     &\! \text{elif }d(o,j)< d_3(o) \\
    0     &\! \text{otherwise} 
    \end{cases} \notag\\
    &+\! \sum_{\second(o)=i} \begin{cases}
    \frac{d_1(o)}{d_3(o)}-\frac{d_1(o)}{d_2(o)}     & \text{if }d(o,j)< d_1(o) \\ 
    \frac{d_1(o)}{d_3(o)}-\frac{d_1(o)}{d_2(o)}     & \text{elif }d(o,j)< d_2(o) \\
    \frac{d_1(o)}{d_3(o)}-\frac{d_1(o)}{d(o,j)}     & \text{elif }d(o,j)< d_3(o) \\
    0     & \text{otherwise} 
    \end{cases}
\;.
\end{align*}

\begin{algorithm2e*}[bp]
\caption{FastMSC: Optimizing the Medoid Silhouette}
\label{alg:fastmsc}
\SetKwBlock{Repeat}{repeat}{}
\SetKw{BreakOuterLoopIf}{break outer loop if}
\Repeat{
\lForEach(\label{alg:fastmsc-loop1}){$x_o$}{update $\nearest(o), \second(o), d_1(o), d_2(o), d_3(o)$} 
    $\Delta\MS^{-m_1},\ldots,\Delta\MS^{-m_i} \gets$ compute loss change removing $m_i$ using \eqref{eq:removal-mi}\;
    $(\Delta\MS^*, m^*, x^*)\gets(0,$null$,$null$)$\;
    \ForEach(\tcp*[f]{each non-medoid}\label{alg:fastmsc-loop2}){$x_j\notin\{m_1,\ldots,m_k\}$} {
        $\Delta\MS_i,\ldots,\Delta\MS_k\gets(\Delta\MS^{-m_1},\ldots,\Delta\MS^{-m_i})$\label{alg:fastmscl6}\tcp*[r]{use removal loss}
        $\Delta\MS^{+x_j}\gets0$\label{alg:fastmscl7}\tcp*[r]{initialize shared accumulator}
        \ForEach(\label{alg:fastpms-loop3}){$x_o\in\{x_1,\ldots,x_n\}$} {
            $d_{oj}\gets d(x_o,x_j)$\tcp*[r]{distance to new medoid}
            \If(\tcp*[f]{new closest}\label{alg:fastmscl10}) {$d_{oj} < d_1(o)$} {
            $\Delta\MS^{+x_j}\gets\Delta\MS^{+x_j}+d_1(o)/d_2(o)-d_{oj}/d_1(o)$\label{alg:fastmscl11}\;
        	$\Delta\MS_{\nearest(o)}\gets \Delta\MS_{\nearest(o)}+ d_{oj}/d_1(o) + d_2(o)/d_3(o) - (d_1(o)+d_{oj})/d_2(o)$\label{alg:fastmscl12}\;
        	$\Delta\MS_{\second(o)}\gets \Delta\MS_{\second(o)}+d_1(o)/d_3(o) - d_1(o)/d_2(o)$\;
            }
            \ElseIf(\tcp*[f]{new first/second closest}\label{alg:fastmsc-if2}) {$d_{oj} < d_2(o)$} {
            $\Delta\MS^{+x_j}\gets\Delta\MS^{+x_j}+d_1(o)/d_2(o)-d_1(o)/d_{oj}$\label{alg:fastmscl15}\;
        	$\Delta\MS_{\nearest(o)}\gets \Delta\MS_{\nearest(o)}+d_1(o)/d_{oj} + d_2(o)/d_3(o) - (d_1(o)+d_{oj})/d_2(o)$\label{alg:fastmscl16}\;
        	$\Delta\MS_{\second(o)}\gets \Delta\MS_{\second(o)}+d_1(o)/d_3(o) - d_1(o)/d_2(o)$\;
            }
            \ElseIf(\tcp*[f]{new second/third closest}\label{alg:fastmsc-if3}) {$d_{oj} < d_3(o)$} {
        	$\Delta\MS_{\nearest(o)}\gets \Delta\MS_{\nearest(o)}+d_2(o)/d_3(o) - d_2(o)/d_{oj}$\;
        	$\Delta\MS_{\second(o)}\gets \Delta\MS_{\second(o)}+d_1(o)/d_3(o) - d_1(o)/d_{oj}$\label{alg:fastmscl20}\;
            }
        }
        $i\gets \argmax\Delta\MS_i$\;
        $\Delta\MS_i\gets\Delta\MS_i+\Delta\MS^{+x_j}$\;
        \lIf(\label{alg1-if1}) {$\Delta\MS_i > \Delta\MS^*$} {
        	$(\Delta\MS^*, m^*, x^*)\gets(\Delta\MS,m_i,x_j)$
        }
    }
    \BreakOuterLoopIf{$\Delta\MS^*\leq0$}\;
    swap roles of medoid $m^*$ and non-medoid $x^*$\tcp*[r]{perform swap}
    $\MS\gets\MS+\Delta\MS^*$\;
}
\Return{$\MS,M$}\;
\end{algorithm2e*}
Computing these additional summands takes $O(N)$ time by iterating over all objects $x_o$,
and adding their contributions to accumulators for $n_1(o)$ and $n_2(o)$.

Once we have identified the best swap, we apply this change and enter the next iteration. In Line~\ref{alg:fastmsc-loop1}, the update can also be optimized to use the cached values and only scan for the third closest if one of the closest was removed.

This then gives Algorithm~\ref{alg:fastmsc}, which computes
$\Delta\MS^{+x_j}$ along with the sum of $\Delta\MS^{-m_i}$ and
these correction terms in an accumulator array.
To help with reimplementing our approach efficiently, we give the final simplified equations in the pseudocode, as the intuition has already been explained above.
The algorithm needs $O(k)$ memory for the accumulators in the loop,
and $O(N)$ additional memory to store the cached $n_1$, $n_2$, $d_1$, $d_2$, and $d_3$ for each object.

This algorithm gives the same result,
but FastMSC (``Fast Medoid Silhouette Clustering'') is $O(k^2)$ faster than the naive PAMMEDSIL, as evidenced by the two main nested loops of FastMSC having $(N-k)\times N$ executions containing only $O(1)$ operations inside.

\subsection{Eager Swapping and Random Initialization}

We can now integrate further improvements by
\citet{DBLP:journals/is/SchubertR21}.
Because doing the best swap (steepest descent) does not appear to commonly find better solutions,
but requires a pass over the entire data set for each step,
we can converge to local optima much faster if we perform every swap that yields an improvement,
even though this means we may repeatedly replace the same medoid and perform ``unnecessary'' swaps, because the cost for searching is significantly higher than for performing a swap. For PAM this was called eager swapping, and yields the variant FasterPAM.
This does not improve theoretical run time (the last iteration will always require a pass over the
entire data set to detect convergence), but empirically reduces the number of iterations substantially, while increasing the number of swaps only slightly.
It will no longer find the same results, but there is no evidence that a steepest descent is beneficial
over choosing the first descent found.
\begin{algorithm2e*}[tbp]
\caption{FasterMSC: FastMSC with eager swapping}
\label{alg:fastermsc}
\SetKwBlock{Repeat}{repeat}{}
\SetKw{BreakOuterLoopIf}{break outer loop if}
$x_{\text{last}}\gets$invalid\;
\lForEach(\label{alg:fastermsc-loop1}){$x_o$}{update $\nearest(o), \second(o), d_1(o), d_2(o), d_3(o)$} 
    $\Delta\MS^{-m_1},\ldots,\Delta\MS^{-m_i} \gets$ compute loss change removing $m_i$ using \eqref{eq:removal-mi}\;
\Repeat{
    \ForEach(\tcp*[f]{each non-medoid}\label{alg:fastermsc-loop2}){$x_j\notin\{m_1,\ldots,m_k\}$} {
        \BreakOuterLoopIf{$x_j = x_{\text{last}}$}\;
        $\Delta\MS\gets(\Delta\MS^{-m_1},\ldots,\Delta\MS^{-m_i})$\label{alg:fastermsc1}\tcp*[r]{use removal loss}
        $\Delta\MS^{+x_j}\gets0$\label{alg:fastermsc2}\tcp*[r]{initialize shared accumulator}
        \ForEach(\label{alg:fastermsc-loop3}){$x_o\in\{x_1,\ldots,x_n\}$} {
            $d_{oj}\gets d(x_o,x_j)$\tcp*[r]{distance to new medoid}
            \If(\tcp*[f]{new closest}\label{alg:fastermsc-if1}) {$d_{oj} < d_1(o)$} {
            $\Delta\MS^{+x_j}\gets\Delta\MS^{+x_j}+d_1(o)/d_2(o)-d_{oj}/d_1(o)$\label{alg:fastermsc4}\;
        	$\Delta\MS_{\nearest(o)}\gets \Delta\MS_{\nearest(o)}+ d_{oj}/d_1(o) + d_2(o)/d_3(o) - (d_1(o)+d_{oj})/d_2(o)$\label{alg:fastermsc5}\;
        	$\Delta\MS_{\second(o)}\gets \Delta\MS_{\second(o)}+d_1(o)/d_3(o) - d_1(o)/d_2(o)$\;
            }
            \ElseIf(\tcp*[f]{new first/second closest}\label{alg:fastermsc-if2}) {$d_{oj} < d_2(o)$} {
            $\Delta\MS^{+x_j}\gets\Delta\MS^{+x_j}+d_1(o)/d_2(o)-d_1(o)/d_{oj}$\label{alg:fastermsc6}\;
        	$\Delta\MS_{\nearest(o)}\gets \Delta\MS_{\nearest(o)}+d_1(o)/d_{oj} + d_2(o)/d_3(o) - (d_1(o)+d_{oj})/d_2(o)$\label{alg:fastermsc7}\;
        	$\Delta\MS_{\second(o)}\gets \Delta\MS_{\second(o)}+d_1(o)/d_3(o) - d_1(o)/d_2(o)$\;
            }
            \ElseIf(\tcp*[f]{new second/third closest}\label{alg:fastermsc-if3}) {$d_{oj} < d_3(o)$} {
        	$\Delta\MS_{\nearest(o)}\gets \Delta\MS_{\nearest(o)}+d_2(o)/d_3(o) - d_2(o)/d_{oj}$\;
        	$\Delta\MS_{\second(o)}\gets \Delta\MS_{\second(o)}+d_1(o)/d_3(o) - d_1(o)/d_{oj}$\label{alg:fastermsc8}\;
            }
        }
        $i\gets \argmax\Delta\MS_i$\tcp*[r]{choose best medoid}
        $\Delta\MS_i\gets\Delta\MS_i+\Delta\MS^{+x_j}$\tcp*[r]{add accumulator}
        \If(\tcp*[f]{eager swapping}\label{alg:fastermsc-if4}) {$\Delta\MS_i < 0$} {
        	swap roles of medoid $m_i$ and non-medoid $x_o$\tcp*[r]{perform swap}
                $\MS\gets\MS+\Delta\MS^*$\;
                update $\Delta\MS^{-m_1},\ldots,\Delta\MS^{-m_i}$\;
                $x_{\text{last}}\gets x_o$\;
        }
    }
}
\Return{$\MS,M$}\;
\end{algorithm2e*}
The main downside to this is, that it increases the dependency on the data ordering, and hence is best
used on shuffled data when run repeatedly.
Similarly, we will study a variant that eagerly performs the first swap that improves the AMS
as FasterMSC (``Fast and Eager Medoid Silhouette Clustering'').

Similar to \citet{DBLP:journals/is/SchubertR21}, where the PAM BUILD initialization had become a bottleneck, we also choose a random initialization.
A single pass over the data set with eager swapping tends to find better solutions than the best initialization strategies, and uniform sampling of medoids is very cheap.

\subsection{Choosing the Number of Clusters}
There are many different approaches to determine the optimal number of clusters, yet this remains a challenging task.
For k-means, the so-called Elbow method is commonly called, but is not very well suited for this purpose, and some alternatives have been surveyed and evaluated by \citet{Schubert/22b}.
Because the primary objective of k-means (i.e., the sum of squared deviations from the nearest mean) as well as that of k-medoids (the sum of deviations from the nearest medoid) improves as we increase the number of clusters, it is common to rely on a secondary quality criterion, i.e., an evaluation measure, that does not have this property.
One popular such measure is indeed the Silhouette (e.g., \cite{DBLP:journals/pr/ArbelaitzGMPP13,DBLP:journals/pr/BrunSHLCSD07}), which tends to drop once clusters get too close to each other, unless they are well separated.

\begin{algorithm2e}[t!]
\caption{DynMSC: FasterMSC for dynamic k}
\label{alg:dynmsc}
\SetKwBlock{Repeat}{repeat}{}
\SetKw{BreakOuterLoopIf}{break outer loop if}
$k \gets \text{max}k$ \;
\While{$k \geq 2$}{
    $x_{\text{last}}\gets$invalid\;
        \ForEach(\label{alg:dynmsc-loop1}){$x_o$}{update $\nearest(o), \second(o), d_1(o), d_2(o), d_3(o)$} 
        $\Delta\MS^{-m_1},\ldots,\Delta\MS^{-m_i} \gets$ compute loss change \eqref{eq:removal-mi}\;
    \Repeat{
        \ForEach(\label{alg:dynmsc1-loop2}){$x_j\notin\{m_1,\ldots,m_k\}$} {
            \BreakOuterLoopIf{$x_j = x_{\text{last}}$}\;
            $\Delta\MS\gets(\Delta\MS^{-m_1},\ldots,\Delta\MS^{-m_i})$\label{alg:dynmsc1}\;
            $\Delta\MS^{+x_j}\gets0$\label{alg:dynmsc2}\;
            \ForEach(\label{alg:dynmsc-loop3}){$x_o\in\{x_1,\ldots,x_n\}$} {
            \vdots\tcp*[r]{see FasterMSC}\label{alg:dynmsc2-omit}
            }
         $i\gets \argmax\Delta\MS_i$\;
        $\Delta\MS_i\gets\Delta\MS_i+\Delta\MS^{+x_j}$\;
        \If(\label{alg:dynmsc-if4}) {$\Delta\MS_i < 0$} {
        	swap roles of $m_i$ and $x_o$\;
                $\MS\gets\MS+\Delta\MS^*$\;
                update $\Delta\MS^{-m_1},\ldots,\Delta\MS^{-m_i}$\;
                $x_{\text{last}}\gets x_o$\;
            }
    }
    }
    $\HS_k\gets\MS$\;
    $\HM_k\gets M$\; 
    $i \gets \argmax\Delta\MS$\tcp*[r]{choose medoid to remove}
    remove medoid $m_i$\tcp*[r]{choose k with highest AMS}
    $k \gets k - 1$\;
}
$j\gets \argmax\HS$\;
\Return{$\HS_j,\HM_j$}
\end{algorithm2e}

A repeated computation of the Silhouette is expensive on larger data sets.
Instead of repeatedly running FastMSC with a different number of clusters and keeping the best result, we present a variant of FastMSC that does not require the number of clusters to be given.
In DynMSC, we begin with a maximum number of clusters, optimize the Average Medoid Silhouette,
then decrease the number of clusters by one,
and repeat until we have reached a minimum number of clusters.
During this process, we store the solution with the highest AMS to return later.
However, we integrate this directly with FasterMSC to save redundant computations.
At the end of the FasterMSC optimization, we already know the removal loss of each medoid (c.f., Eq.~\ref{eq:removal-mi}), and instead of removing a random medoid, we can remove the one that incurs the least reduction in AMS, which gives us better starting conditions. We can also retain some of the cached data, and only need to find the nearest medoids for those, where one of the three nearest medoids has been removed. For others, this metadata remains valid. Removing a medoid is very similar to performing a swap, except that there is no replacement medoid. We then can continue with the reduced~$k$ instead of restarting from scratch.
Algorithm~\ref{alg:dynmsc} gives a pseudocode for DynMSC. In Line~\ref{alg:dynmsc2-omit} we omitted code of the inner loop that is identical to FasterMSC.

\section{Experiments}
We next evaluate clustering quality, to show the benefits of optimizing AMS.
We report both AMS and ASW, as well as the supervised measures Adjusted Random Index (ARI) and
Normalized Mutual Information (NMI) that require labeled data.
Afterward, we study the scalability to verify the empirical speedup for our algorithms FastMSC, FasterMSC, and DynMSC.

\begin{figure}[tbp]
\begin{subfigure}{0.46\textwidth}\centering
\begin{tikzpicture}[font=\rmfamily\small]
\begin{axis}[unit vector ratio*=1 1 1, width=1.1\textwidth, xmin = -100, xmax = 150, ymin = -100, ymax = 150, xlabel={PC1}, ylabel={PC2}, xticklabels={,,}, yticklabels={,,}]
\addplot +[only marks, mark=text, text mark=1, mark options={scale=0.6}, Pa1] table [x=b, y=c, col sep=comma] {results/kolodziejczyk/counttable_al_2i.csv};\label{plot1_l_1}
\addplot +[only marks, mark=text, text mark=2, mark options={scale=0.6}, Pa2] table [x=b, y=c, col sep=comma] {results/kolodziejczyk/counttable_al_a2i.csv};\label{plot1_l_2}
\addplot +[only marks, mark=text, text mark=3, mark options={scale=0.6}, Pa3] table [x=b, y=c, col sep=comma] {results/kolodziejczyk/counttable_al_lif.csv};\label{plot1_l_3}
\addlegendimage{/pgfplots/refstyle=plot1_l_1}\addlegendentry{2i}
\addlegendimage{/pgfplots/refstyle=plot1_l_2}\addlegendentry{a2i}
\addlegendimage{/pgfplots/refstyle=plot1_l_3}\addlegendentry{lif}
\end{axis}
\end{tikzpicture}
\caption{\citet{Kolodziejczyk/15a}}
\label{plot2_1}
\end{subfigure}
\begin{subfigure}{0.46\textwidth}\centering
\begin{tikzpicture}[font=\rmfamily\small]
\begin{axis}[unit vector ratio*=1 1 1, width=1.1\textwidth, xmin = -30, xmax = 70, ymin = -50, ymax = 50, xlabel={PC2}, ylabel={PC1}, xticklabels={,,}, yticklabels={,,}]
\addplot +[only marks, mark=text, text mark=1, mark options={scale=0.6}, Pa1] table [x=c, y=b, col sep=comma] {results/klein/counttable_klein_lif.csv};\label{plot2_l_1}
\addplot +[only marks, mark=text, text mark=2, mark options={scale=0.6}, Pa2] table [x=c, y=b, col sep=comma] {results/klein/counttable_klein_2d.csv};\label{plot2_l_2}
\addplot +[only marks, mark=text, text mark=4, mark options={scale=0.6}, Pa3] table [x=c, y=b, col sep=comma] {results/klein/counttable_klein_4d.csv};\label{plot2_l_3}
\addplot +[only marks, mark=text, text mark=7, mark options={scale=0.6}, Pa4] table [x=c, y=b, col sep=comma] {results/klein/counttable_klein_7d.csv};\label{plot2_l_4}
\addlegendimage{/pgfplots/refstyle=plot2_l_1}\addlegendentry{lif}
\addlegendimage{/pgfplots/refstyle=plot2_l_2}\addlegendentry{+2d}
\addlegendimage{/pgfplots/refstyle=plot2_l_3}\addlegendentry{+4d}
\addlegendimage{/pgfplots/refstyle=plot2_l_4}\addlegendentry{+7d}
\end{axis}
\end{tikzpicture}
\caption{\citet{Klein/15a}}
\label{plot2_2}
\end{subfigure}
\caption{Different kind of mouse embryonic stem cells (mESCs). For both data sets we have done PCA and plot the first two principal components. (a) shows 704 mESCs grown in three different conditions and (b) 2717 mESCs at the moment of LIF withdrawal, 2 days after, 4 days after, and 7 days after.}
\label{plot2}
\end{figure}
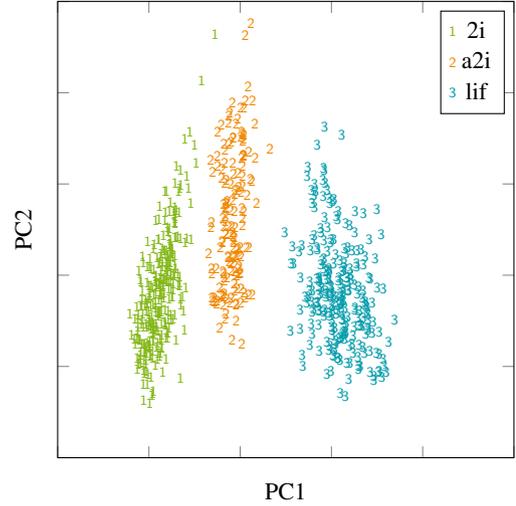
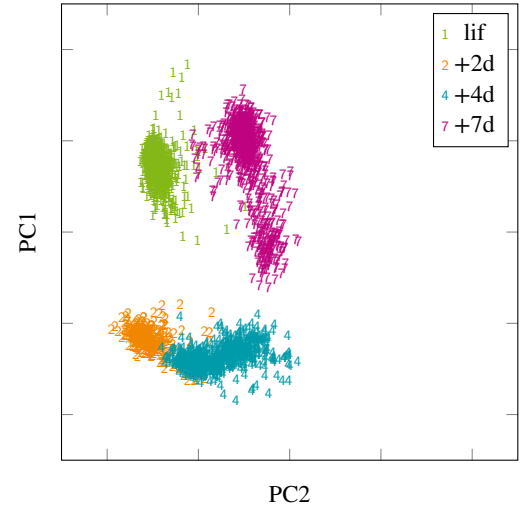

\subsection{Data Sets}
Since it became possible to map gene expression at the single-cell level by RNA sequencing, clustering on these has become a popular task, and Silhouette is a popular evaluation measure there. \citet{Hie/19a} use Silhouette coefficient distributions to compare 26 diverse scRNA-seq datasets under different parameters.
Single-cell RNA sequencing (scRNA-seq) provides high-dimensional data that requires appropriate preprocessing to extract information.
After extraction of significant genes, these marker genes are validated by clustering of proper cells.
We explore two publicly available data sets with larger sample size (by scRNA standards; the data size is not challenging for cluster analysis) of scRNA-sequencing of mouse embryonic stem cells~(mESCs).
\citet{Kolodziejczyk/15a} studied 704 mESCs with 38561 genes grown in three different conditions (2i, a2i, and serum).
\citet{Klein/15a} worked on the influence leukemia inhibitory factor (LIF) withdrawal on mESCs.
For this, they studied a total of 2717 mESCs with 24175 genes.
The data included 933 cells after LIF-withdrawal, 303 cells two days after, 683~cells 4~days after, and 798~cells 7~days after.
We normalize each cell by the total counts over all genes, so that every cell has a total count equal to the median of total counts for observations (cells) before normalization, then we perform principal component analysis (PCA) and use the first three principal components for clustering. This preprocessing matches the procedure of \citet{Kolodziejczyk/15a} in clustering the mESCs Grown in serum, 2i, and a2i media from the original publication.

To test the scalability of our new variants, we need larger data sets.
We use the well-known MNIST data set, with 784~features and 60000~samples
(PAMSIL will not be able to handle this size in reasonable time).
We implemented our algorithms in Rust, extending the \texttt{kmedoids} package~\cite{Schubert/22a},
wrapped with Python, and we make our source code available in this package.
We perform all computations in the same package,
to avoid confounding factors caused by comparing two different implementations~\cite{Kriegel/Schubert/17a}.
We run 10 restarts on an AMD EPYC 7302 processor using a single thread, and evaluate the average values.

\subsection{Clustering Quality}
We evaluated all methods with PAM BUILD initialization and uniform random initialization.
To evaluate the relevancy of the Average Silhouette Width and the Average Medoid Silhouette,
we compare to the true labels using the Adjusted Rand Index (ARI) and Normalized Mutual Information (NMI), two common external measures in clustering.
\begin{figure}[tbp]
\centering
\begin{subfigure}{0.46\textwidth}
\begin{tikzpicture}[font=\rmfamily\small]
\begin{axis}[unit vector ratio*=1 1 1, width=1.1\textwidth, xmin = -100, xmax = 150, ymin = -100, ymax = 150, xlabel={PC1}, ylabel={PC2}, xticklabels={,,}, yticklabels={,,}]
\addplot +[mark options={scale=0.6}, only marks, mark=text, text mark=1, fill=black, color = black] table [x=b, y=c, col sep=comma] {results/kolodziejczyk/zpammedsil_buildlabels_false.csv};
\addplot +[mark options={scale=0.6}, only marks, mark=text, text mark=1,Pa1] table [x=b, y=c, col sep=comma] {results/kolodziejczyk/zpammedsil_buildlabels_true.csv};
\addplot +[mark options={scale=0.6}, only marks, mark=text, text mark=2, fill=black, color = black] table [x=b, y=c, col sep=comma] {results/kolodziejczyk/zpammedsil_buildlabels1_false.csv};
\addplot +[mark options={scale=0.6}, only marks, mark=text, text mark=2, Pa2] table [x=b, y=c, col sep=comma] {results/kolodziejczyk/zpammedsil_buildlabels1_true.csv};
\addplot +[mark options={scale=0.6}, only marks, mark=text, text mark=3, fill=black, color = black] table [x=b, y=c, col sep=comma] {results/kolodziejczyk/zpammedsil_buildlabels2_false.csv};
\addplot +[mark options={scale=0.6}, only marks, mark=text, text mark=3, Pa3] table [x=b, y=c, col sep=comma] {results/kolodziejczyk/zpammedsil_buildlabels2_true.csv};
\end{axis}
\end{tikzpicture}
\caption{Results for PAMMEDSIL (BUILD)}
\label{plot7_1}
\end{subfigure}
\begin{subfigure}{0.46\textwidth}
\begin{tikzpicture}[font=\rmfamily\small]
\begin{axis}[unit vector ratio*=1 1 1, width=1.1\textwidth, xmin = -100, xmax = 150, ymin = -100, ymax = 150, xlabel={PC1}, ylabel={PC2}, xticklabels={,,}, yticklabels={,,}]
\addplot +[mark options={scale=0.6}, only marks, mark=text, text mark=1, fill=black, color = black] table [x=b, y=c, col sep=comma] {results/kolodziejczyk/zpamsil_buildlabels_false.csv};
\addplot +[mark options={scale=0.6}, only marks, mark=text, text mark=1,Pa1] table [x=b, y=c, col sep=comma] {results/kolodziejczyk/zpamsil_buildlabels_true.csv};
\addplot +[mark options={scale=0.6}, only marks, mark=text, text mark=2, fill=black, color = black] table [x=b, y=c, col sep=comma] {results/kolodziejczyk/zpamsil_buildlabels1_false.csv};
\addplot +[mark options={scale=0.6}, only marks, mark=text, text mark=2,Pa2] table [x=b, y=c, col sep=comma] {results/kolodziejczyk/zpamsil_buildlabels1_true.csv};
\addplot +[mark options={scale=0.6}, only marks, mark=text, text mark=3, fill=black, color = black] table [x=b, y=c, col sep=comma] {results/kolodziejczyk/zpamsil_buildlabels2_false.csv};
\addplot +[mark options={scale=0.6}, only marks, mark=text, text mark=3,Pa3] table [x=b, y=c, col sep=comma] {results/kolodziejczyk/zpamsil_buildlabels2_true.csv};
\end{axis}
\end{tikzpicture}
\caption{Results for PAMSIL (BUILD)}
\label{fig52}
\end{subfigure}
\caption{Clustering results for the scRNA-seq data sets of \citet{Kolodziejczyk/15a} for PAMMEDSIL and PAMSIL. All correctly predicted labels are colored by the corresponding cluster and all errors are marked as black. }
\end{figure}
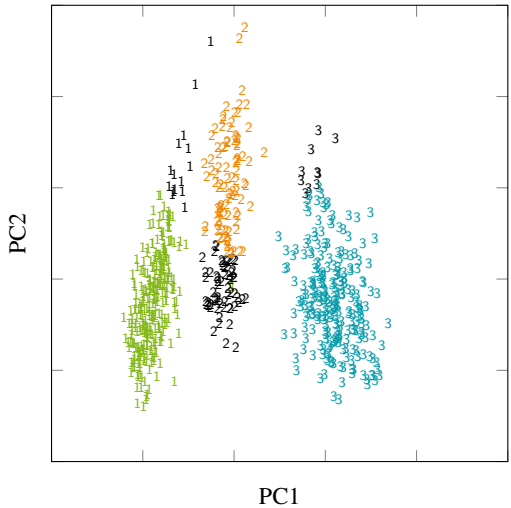
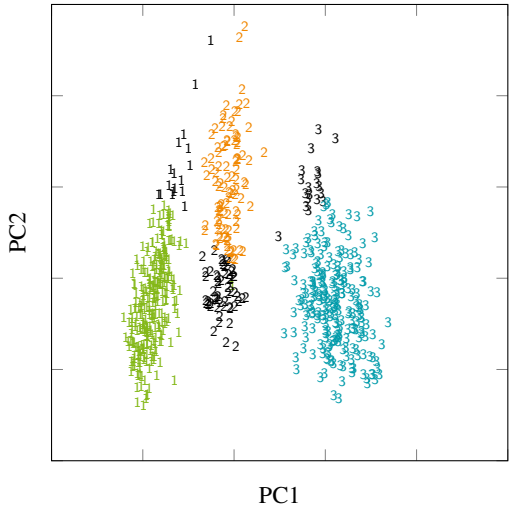
\begin{table*}[btp]\centering
\caption{Clustering results for the scRNA-seq data sets of \citet{Kolodziejczyk/15a} for PAM, PAMSIL, and all variants of PAMMEDSIL. All methods are evaluated for BUILD and Random initialization, and true known $k$=3.}
\label{tab1}
\setlength{\tabcolsep}{6pt}
\begin{tabular}{ l|l|r|r|r|r|r } 
Algorithm & Initialization & AMS & ASW & ARI & NMI & run time (ms) \\
\hline
PAM & BUILD & 0.6566 & 0.5397 & 0.6944 & 0.6549 & 18.26 \\ 
PAM & Random & 0.6566 & 0.5397 & 0.6944 & 0.6549 & 22.67\\ 
PAMMEDSIL & BUILD & \bf0.6747 & 0.5474 & \bf0.7174 & \bf0.6953 & 62.63 \\ 
PAMMEDSIL & Random & \bf0.6747 & 0.5474 & \bf0.7174 & \bf0.6953 & 61.91 \\ 
FastMSC & BUILD & \bf0.6747 & 0.5474 & \bf0.7174 & \bf0.6953 & 25.09 \\
FastMSC & Random & \bf0.6747 & 0.5474 & \bf0.7174 & \bf0.6953 & 24.67 \\
FasterMSC & BUILD & \bf0.6747 & 0.5474 & \bf0.7174 & \bf0.6953 & \bf9.95 \\
FasterMSC & Random & \bf0.6747 & 0.5474 & \bf0.7174 & \bf0.6953 & 10.95 \\
PAMSIL & BUILD & 0.6490 & \bf0.5507 & 0.6962 & 0.6677 & 12493.86 \\
PAMSIL & Random & 0.5799 & 0.5490 & 0.6652 & 0.6633 & 16045.47 \\
\end{tabular}
\end{table*}
We first discuss the results for the data set from \citet{Kolodziejczyk/15a}, shown in Table~\ref{tab1}.
As expected, PAMSIL found the best result with respect to ASW, while the Medoid Silhouette based methods found better results regarding AMS.
PAM, which optimizes the total deviation, found slightly worse results in all measures.
Regarding the known labels,
the highest ARI and NMI scores are achieved by the Medoid Silhouette methods.
The different initializations produced the same results for all methods except PAMSIL here.
Because of the small $k=3$, the speedup of FastMSC over PAMMEDSIL is only small, the additional speedup of FasterMSC is due to reducing the number of iterations.
FasterMSC was able to find the best solutions, but at a 1255$\times$ faster run time than PAMSIL, confirming the expected improvements.
Because AMS and ASW are correlated, with rather small differences between the solutions found by the methods, we argue that AMS is a suitable approximation for ASW, at a much reduced run time.

Since there were no variations in the resulting medoids for the different restarts of the experiment, we can easily compare single results visually.
Figure~\ref{fig52} compares the results of PAMMEDSIL/FastMSC and PAMSIL, showing in black which
points are clustered differently than in the given labels.
As indicated by the similar evaluation scores, both clusterings are very similar, with class~1 captured slightly better in one, class~3 slightly better in the other result.

Table~\ref{tab2} shows the clustering results for the scRNA-seq data sets of \citet{Klein/15a}.
In contrast to Kolodziejczyk et al. data set, we here obtain a slightly higher ARI and NMI for PAMSIL than for the AMS optimization methods.
Again, the results are very similar, with the ASW obtained from the AMS methods being almost identical. The differences in AMS are more pronounced.
Because this data set is larger, the gap in run time becomes more pronounced.
While FasterMSC still finishes in less than a second, PAMMEDSIL now takes multiple seconds to complete, and PAMSIL already needs half an hour to complete.
Because of the larger data set size, FasterMSC has become 16521$\times$ faster than PAMSIL and 6$\times$ faster than PAMMEDSIL.

\begin{table*}[btp]\centering
\caption{Clustering results for the scRNA-seq data sets of \citet{Klein/15a} for PAM, PAMSIL and all variants of PAMMEDSIL. All methods are evaluated for BUILD and Random initialization and true known $k$=4.}
\label{tab2}
\setlength{\tabcolsep}{6pt}
\begin{tabular}{ l|l|r|r|r|r|r } 
Algorithm & Initialization & AMS & ASW & ARI & NMI & run time (ms) \\
\hline
PAM & BUILD & 0.7673 & 0.6825 & 0.8450 & 0.8726 & 355.55 \\ 
PAM & Random & 0.7348 & 0.6292 & 0.8343 & 0.8526 & 476.18\\ 
PAMMEDSIL & BUILD & \bf0.7748 & 0.6834 & 0.8441 & 0.8685 & 2076.15 \\ 
PAMMEDSIL & Random & \bf0.7748 & 0.6834 & 0.8441 & 0.8685 & 3088.77 \\ 
FastMSC & BUILD & \bf0.7748 & 0.6834 & 0.8441 & 0.8685 & 212.01 \\
FastMSC & Random & \bf0.7748 & 0.6834 & 0.8441 & 0.8685 & 305.00 \\
FasterMSC & BUILD & \bf0.7748 & 0.6834 & 0.8441 & 0.8685 & 163.74 \\
FasterMSC & Random & \bf0.7748 & 0.6834 & 0.8441 & 0.8685 & \bf122.63 \\
PAMSIL & BUILD & 0.7649 & \bf0.6838 & \bf0.8483 & \bf0.8739 & 2026025.10 \\
PAMSIL & Random & 0.7220 & 0.6837 & 0.8472 & 0.8724 & 1490354.10 \\
\end{tabular}
\end{table*}

\subsection{Number of Clusters}
\begin{figure*}[tb]\centering
    \begin{subfigure}{.49\textwidth}
	\begin{tikzpicture}[font=\rmfamily\scriptsize]
		\begin{axis}[
		    legend style={at={(.05,.60)},anchor=north west,fill=none,draw=none,inner sep=0,font=\rmfamily\tiny},
		    legend cell align={left},legend columns=2,
			height=23mm,
    	    width=\textwidth-15mm,
    	    scale only axis,
		    every axis label/.style={inner sep=0, outer sep=0},
			xlabel = {number of clusters},
			xmin = 2, xmax = 9,
			ylabel = {Average (Medoid) Silhouette},
			ymin = 0.45, ymax = 0.68,
	    	xtick={2,3,4,5,6,7,8,9}
			]
			\addplot[Pa2, mark=oplus*]coordinates {
			(2, 0.6568)
			(3, 0.6747)
			(4, 0.6675)
			(5, 0.6656)
			(6, 0.6651)
			(7, 0.6044)
			(8, 0.6001)
			(9, 0.5787)
			}; \label{plot_kolod_ams}
   			\addplot[Pa1, mark=oplus]coordinates {
			(2, 0.5232)
			(3, 0.5490)
			(4, 0.5224)
			(5, 0.5013)
			(6, 0.4991)
			(7, 0.4869)
			(8, 0.4776)
			(9, 0.4599)
			}; \label{plot_kolod_asw}
     		\addlegendimage{/pgfplots/refstyle=plot_kolod_ams}\addlegendentry{AMS (FastMSC)}
			\addlegendimage{/pgfplots/refstyle=plot_kolod_asw}\addlegendentry{ASW (PAMSIL)}
		\end{axis}
	\end{tikzpicture}
	\caption{Kolodziejczyk et al. data set contains \textbf{three} different groups of mouse embryonic stem cells.}
	\end{subfigure}
\hfill
 \begin{subfigure}{.49\textwidth}
	\begin{tikzpicture}[font=\rmfamily\scriptsize]
		\begin{axis}[
		    legend style={at={(.05,.25)},anchor=north west,fill=none,draw=none,inner sep=0,font=\rmfamily\tiny},
		    legend cell align={left},legend columns=2,
			height=23mm,
    	    width=\textwidth-15mm,
    	    scale only axis,
		    every axis label/.style={inner sep=0, outer sep=0},
			xlabel = {number of clusters},
			xmin = 2, xmax = 9,
			ylabel = {Average (Medoid) Silhouette},
			ymin = 0.64, ymax = 0.78,
	    	xtick={2,3,4,5,6,7,8,9}
			]
			\addplot[Pa2, mark=oplus*]coordinates {
			(2, 0.7549)
			(3, 0.7701)
			(4, 0.7747)
			(5, 0.7738)
			(6, 0.7532)
			(7, 0.7693)
			(8, 0.7504)
			(9, 0.7529)
			}; 
   			\addplot[Pa1, mark=oplus]coordinates {
			(2, 0.6706)
			(3, 0.6689)
			(4, 0.6838)
			(5, 0.6798)
			(6, 0.6771)
			(7, 0.6595)
			(8, 0.6474)
			(9, 0.6444)
			}; 
		\end{axis}
	\end{tikzpicture}
	\caption{Klein et al. dataset contains embryonic stem cells measured at \textbf{four} different time points.}
	\end{subfigure}
	\caption{Average Medoid Silhouette and Average Silhouette Width for different number of Clusters with FastMSC and PAMSIL.}
	\label{fig6}
\end{figure*}
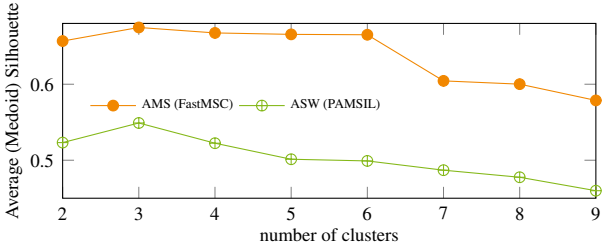
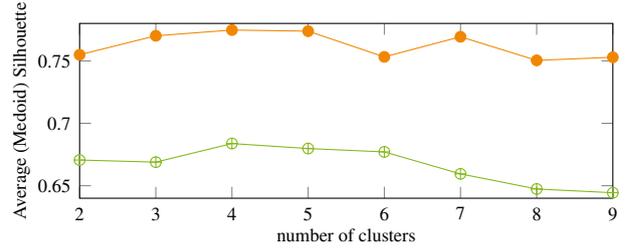
The AMS and ASW are based on very similar ideas of cluster quality, and while AMS is \emph{not a numerical} approximation of the ASW, it usually produces a similar ranking of clusterings. Therefore, if the ASW on a dataset is a suitable heuristic to determine the optimal number, we can assume that the AMS is also a good heuristic for this purpose.

To evaluate how well the Average Medoid Silhouette is suited to choose the optimal number of clusters, we perform FastMSC and PAMSIL on the Klein et al. and Kolodziejczyk et al. data sets for $k = 2$~to~$9$ clusters. Klein et al. data set contains four different groups of data points and Kolodziejczyk et al. data set contains three different groups. As seen in Figure~\ref{fig6}, FastMSC finds the largest AMS as well as PAMSIL the largest ASW at the correct number of clusters for both data sets.

\subsection{Scalability}
To evaluate the scalability of our methods, we use the well-known MNIST data, which has 784 variables ($28 \times 28$ pixels) and 60000 samples. We use the first $N = 1000$ to $30000$ samples and compare $k= 10$ and $k = 100$.
Due to its high run time, PAMSIL is not able to handle this size in a reasonable time.
In addition to the methods for direct AMS optimization, we include the FastPAM1 and FasterPAM algorithms~\cite{DBLP:journals/is/SchubertR21,DBLP:conf/sisap/SchubertR19}.
For all methods we use random initialization.
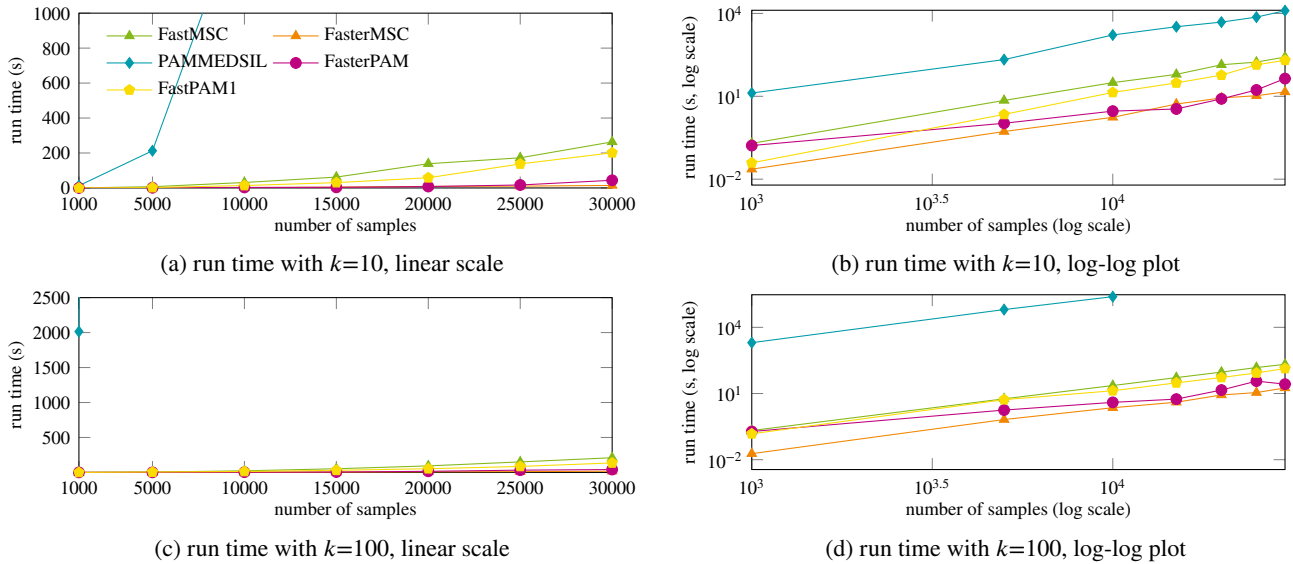
\begin{figure*}[tb]\centering
    \begin{subfigure}{.49\textwidth}
	\begin{tikzpicture}[font=\rmfamily\scriptsize]
		\begin{axis}[
		    legend style={at={(.05,.95)},anchor=north west,fill=none,draw=none,inner sep=0},
		    legend cell align={left},legend columns=2,
			height=23mm,
    	    width=\textwidth-15mm,
    	    scale only axis,
		    every axis label/.style={inner sep=0, outer sep=0},
			xlabel = {number of samples},
			xmin = 1000, xmax = 30000,
			ylabel = {run time (s)},
			ymin = 0, ymax = 1000,
			yticklabel style={/pgf/number format/fixed},
			yticklabel style={/pgf/number format/1000 sep=},
	    	xtick={1000,5000,10000,15000,20000,25000,30000},
			xticklabel style={/pgf/number format/fixed},
			xticklabel style={/pgf/number format/1000 sep=},
			scaled x ticks=false
			]
			\addplot[Pa1, mark=triangle*]coordinates {
			(1000, 0.198)
			(5000, 7.075)
			(10000, 31.187)
			(15000, 62.150)
			(20000, 138.592)
			(25000, 172.462)
			(30000, 263.539)
			};
			\addplot[Pa2, mark=triangle*]coordinates {
			(1000, 0.023)
			(5000, 0.531)
			(10000, 1.734)
			(15000, 5.238)
			(20000, 8.723)
			(25000, 10.502)
			(30000, 14.397)
			};
			\addplot[Pa3, mark=diamond*]coordinates {
			(1000, 13.124)
			(5000, 212.825)
			(10000, 1657.221)
			(15000, 3298.933)
			(20000, 4847.659)
			(25000, 7343.586)
			(30000, 12568.364)
			};
			\addplot[Pa4, mark=oplus*]coordinates {
			(1000, 0.165)
			(5000, 1.056)
			(10000, 2.904)
			(15000, 3.473)
			(20000, 8.076)
			(25000, 16.969)
			(30000, 43.808)
			};
			\addplot[Pa5, mark=pentagon*]coordinates {
			(1000, 0.039)
			(5000, 2.213)
			(10000, 13.783)
			(15000, 30.483)
			(20000, 58.334)
			(25000, 137.264)
			(30000, 201.293)
			};
			\addlegendimage{/pgfplots/refstyle=plot_fpms}\addlegendentry{FastMSC}
			\addlegendimage{/pgfplots/refstyle=plot_fepms}\addlegendentry{FasterMSC}
			\addlegendimage{/pgfplots/refstyle=plot_pms}\addlegendentry{PAMMEDSIL}
			\addlegendimage{/pgfplots/refstyle=plot_fp}\addlegendentry{FasterPAM}
			\addlegendimage{/pgfplots/refstyle=plot_fp1}\addlegendentry{FastPAM1}
		\end{axis}
	\end{tikzpicture}
	\caption{run time with $k{=}10$, linear scale}
	\end{subfigure}
	\hfill
    \begin{subfigure}{.49\textwidth}
	\begin{tikzpicture}[font=\rmfamily\scriptsize]
		\begin{axis}[
		    legend style={at={(0.70,0.35)},anchor=north,fill=none,draw=none,inner sep=0},
		    legend cell align={left},legend columns=2,
    	    scale only axis,
		    every axis label/.style={inner sep=0, outer sep=0},
			height=23mm,
    	    width=\textwidth-15mm,
			xlabel = {number of samples (log scale)},
			xmin = 1000, xmax = 30000,
			ylabel = {run time (s, log scale)},
			ymax = 13000,
			ymode=log, xmode=log,
			yticklabel style={/pgf/number format/fixed},
			yticklabel style={/pgf/number format/1000 sep=},
			xticklabel style={/pgf/number format/fixed},
			xticklabel style={/pgf/number format/1000 sep=},
			scaled x ticks=false
			]
			\addplot[Pa1, mark=triangle*]coordinates {
			(1000, 0.198)
			(5000, 7.075)
			(10000, 31.187)
			(15000, 62.150)
			(20000, 138.592)
			(25000, 172.462)
			(30000, 263.539)
			};
			\addplot[Pa2, mark=triangle*]coordinates {
			(1000, 0.023)
			(5000, 0.531)
			(10000, 1.734)
			(15000, 5.238)
			(20000, 8.723)
			(25000, 10.502)
			(30000, 14.397)
			};
			\addplot[Pa3, mark=diamond*]coordinates {
			(1000, 13.124)
			(5000, 212.825)
			(10000, 1657.221)
			(15000, 3298.933)
			(20000, 4847.659)
			(25000, 7343.586)
			(30000, 12568.364)
			};
			\addplot[Pa4, mark=oplus*]coordinates {
			(1000, 0.165)
			(5000, 1.056)
			(10000, 2.904)
			(15000, 3.473)
			(20000, 8.076)
			(25000, 16.969)
			(30000, 43.808)
			};
			\addplot[Pa5, mark=pentagon*]coordinates {
			(1000, 0.039)
			(5000, 2.213)
			(10000, 13.783)
			(15000, 30.483)
			(20000, 58.334)
			(25000, 137.264)
			(30000, 201.293)
			};
		\end{axis}
	\end{tikzpicture}
	\caption{run time with $k{=}10$, log-log plot}
	\end{subfigure}
	\\
	\begin{subfigure}{.49\textwidth}
	\begin{tikzpicture}[font=\rmfamily\scriptsize]
		\begin{axis}[
		    legend style={at={(0.70,0.35)},anchor=north,fill=none,draw=none,inner sep=0},
		    legend cell align={left},legend columns=2,
    	    scale only axis,
		    every axis label/.style={inner sep=0, outer sep=0},
			height=23mm,
    	    width=\textwidth-15mm,
			xlabel = {number of samples},
			xmin = 1000, xmax = 30000,
			ylabel = {run time (s)},
			ymin = 0, ymax = 2500,
			ytick={500,1000,1500,2000,2500},
			yticklabel style={/pgf/number format/fixed},
			yticklabel style={/pgf/number format/1000 sep=},
			xtick={1000,5000,10000,15000,20000,25000,30000},
			xticklabel style={/pgf/number format/fixed},
			xticklabel style={/pgf/number format/1000 sep=},
			scaled x ticks=false
			]
			\addplot[Pa1, mark=triangle*]coordinates {
			(1000, 0.21)
			(5000, 5.745)
			(10000, 22.975)
			(15000, 52.596)
			(20000, 93.356)
			(25000, 150.420)
			(30000, 210.180)
			};
			\addplot[Pa2, mark=triangle*]coordinates {
			(1000, 0.01879)
			(5000, 0.661)
			(10000, 2.265)
			(15000, 4.025)
			(20000, 8.600)
			(25000, 11.023)
			(30000, 17.910)
			}; 
			\addplot[Pa3, mark=diamond*]coordinates {
			(1000, 2014.386)
			(5000, 63854.893)
			(10000, 245499.702)
			};
			\addplot[Pa4, mark=oplus*]coordinates {
			(1000, 0.313)
			(5000, 1.171)
			(10000, 3.444)
			(15000, 5.547)
			(20000, 15.912)
			(25000, 32.377)
			(30000, 41.221)
			};
			\addplot[Pa5, mark=pentagon*]coordinates {
			(1000, 0.149)
			(5000, 5.254)
			(10000, 13.345)
			(15000, 30.426)
			(20000, 53.143)
			(25000, 85.543)
			(30000, 135.455)
			};
		\end{axis}
	\end{tikzpicture}
	\caption{run time with $k{=}100$, linear scale}
	\end{subfigure}
	\hfill
	\begin{subfigure}{.49\textwidth}
	\begin{tikzpicture}[font=\rmfamily\scriptsize]
		\begin{axis}[
		    legend style={at={(0.70,0.35)},anchor=north,fill=none,draw=none,inner sep=0},
		    legend cell align={left},legend columns=2,
    	    scale only axis,
		    every axis label/.style={inner sep=0, outer sep=0},
			height=23mm,
    	    width=\textwidth-15mm,
			xlabel = {number of samples (log scale)},
			xmin = 1000, xmax = 30000,
			ylabel = {run time (s, log scale)},
			  ymax = 300000,
			ymode=log, xmode=log,
			yticklabel style={/pgf/number format/fixed},
			yticklabel style={/pgf/number format/1000 sep=},
			xticklabel style={/pgf/number format/fixed},
			xticklabel style={/pgf/number format/1000 sep=},
			scaled x ticks=false
			]
			\addplot[Pa1, mark=triangle*]coordinates {
			(1000, 0.21)
			(5000, 5.745)
			(10000, 22.975)
			(15000, 52.596)
			(20000, 93.356)
			(25000, 150.420)
			(30000, 210.180)
			};
			\addplot[Pa2, mark=triangle*]coordinates {
			(1000, 0.01879)
			(5000, 0.661)
			(10000, 2.265)
			(15000, 4.025)
			(20000, 8.600)
			(25000, 11.023)
			(30000, 17.910)
			};
			\addplot[Pa3, mark=diamond*]coordinates {
			(1000, 2014.386)
			(5000, 63854.893)
			(10000, 245499.702)
			};
			\addplot[Pa4, mark=oplus*]coordinates {
			(1000, 0.190)
			(5000, 1.776)
			(10000, 3.976)
			(15000, 5.571)
			(20000, 14.372)
			(25000, 36.438)
			(30000, 26.592)
			};
			\addplot[Pa5, mark=pentagon*]coordinates {
			(1000, 0.149)
			(5000, 5.254)
			(10000, 13.345)
			(15000, 30.426)
			(20000, 53.143)
			(25000, 85.543)
			(30000, 135.455)
			}; 
		\end{axis}
	\end{tikzpicture}
	\caption{run time with $k{=}100$, log-log plot}
	\end{subfigure}
	\caption{Run time on MNIST data (time out 24 hours)}
	\label{fig5}
\end{figure*}
\begin{figure*}[tb]\centering
    \begin{subfigure}{.49\textwidth}
	\begin{tikzpicture}[font=\rmfamily\scriptsize]
		\begin{axis}[
		    legend style={at={(.05,.95)},anchor=north west,fill=none,draw=none,inner sep=0},
		    legend cell align={left},legend columns=2,
			height=23mm,
    	    width=\textwidth-15mm,
    	    scale only axis,
		    every axis label/.style={inner sep=0, outer sep=0},
			xlabel = {number of samples},
			xmin = 1000, xmax = 30000,
			ylabel = {run time (s)},
			ymin = 0, ymax = 700,
			yticklabel style={/pgf/number format/fixed},
			yticklabel style={/pgf/number format/1000 sep=},
	    	xtick={1000,5000,10000,15000,20000,25000,30000},
			xticklabel style={/pgf/number format/fixed},
			xticklabel style={/pgf/number format/1000 sep=},
			scaled x ticks=false
			]
   			\addplot[Pa1, mark=triangle*]coordinates {
			(1000, 0.481)
			(5000, 20.198)
			(10000, 41.631)
			(15000, 74.895)
			(20000, 114.534)
			(25000, 156.904)
			(30000, 292.888)
			}; \label{plot_fastermscd}
			\addplot[Pa2, mark=triangle*]coordinates {
			(1000, 0.353)
			(5000, 11.310)
			(10000, 19.567)
			(15000, 35.949)
			(20000, 58.412)
			(25000, 79.364)
			(30000, 153.298)
			}; \label{plot_dynfastermscd}
			\addplot[Pa3, mark=diamond*]coordinates {
			(1000, 10.491)
			(5000, 39.194)
			(10000, 148.275)
			(15000, 229.779)
			(20000, 337.475)
			(25000, 484.687)
			(30000, 625.753)
			}; \label{plot_sklearnd}
            \addlegendimage{/pgfplots/refstyle=plot_fastermscd}\addlegendentry{FasterMSC}
			\addlegendimage{/pgfplots/refstyle=plot_dynfastermscd}\addlegendentry{DynMSC}
			\addlegendimage{/pgfplots/refstyle=plot_sklearnd}\addlegendentry{scikit-learn k-means}
		\end{axis}
	\end{tikzpicture}
	\caption{run time}
	\end{subfigure}
    \begin{subfigure}{.49\textwidth}
	\begin{tikzpicture}[font=\rmfamily\scriptsize]
		\begin{axis}[
		    legend style={at={(.05,.95)},anchor=north west,fill=none,draw=none,inner sep=0},
		    legend cell align={left},legend columns=2,
			height=23mm,
    	    width=\textwidth-15mm,
    	    scale only axis,
		    every axis label/.style={inner sep=0, outer sep=0},
			xlabel = {number of samples},
			xmin = 1000, xmax = 30000,
			ylabel = {number of swaps},
			ymin = 0, ymax = 3500,
			yticklabel style={/pgf/number format/fixed},
			yticklabel style={/pgf/number format/1000 sep=},
	    	xtick={1000,5000,10000,15000,20000,25000,30000},
			xticklabel style={/pgf/number format/fixed},
			xticklabel style={/pgf/number format/1000 sep=},
			scaled x ticks=false
			]
   			\addplot[Pa1, mark=triangle*]coordinates {
			(1000, 2110.0)
			(5000, 2711.0)
			(10000, 2812.0)
			(15000, 2851.0)
			(20000, 2736.0)
			(25000, 2574.0)
			(30000, 2948.0)
			}; 
			\addplot[Pa2, mark=triangle*]coordinates {
			(1000, 456)
			(5000, 811)
			(10000, 1141)
			(15000, 1498)
			(20000, 1412)
			(25000, 1678)
			(30000, 1704)
			};
		\end{axis}
	\end{tikzpicture}
	\caption{number of swaps for DynMSC and FasterMSC}
	\end{subfigure}
	\caption{Run time and number of swaps on MNIST data for 1000 to 30000 samples. Comparing DynMSC, FasterMSC, and k-means in scikit-learn including Silhouette score calculation. We evaluate $k=2$ to $k=50$, for FasterMSC single calls with random initialization, for scikit-learn single calls with $kmeans++$ initialization with calling the Silhouette score, and DynMSC with maximum $k=50$ and random initialization.}
	\label{fig7}
\end{figure*}
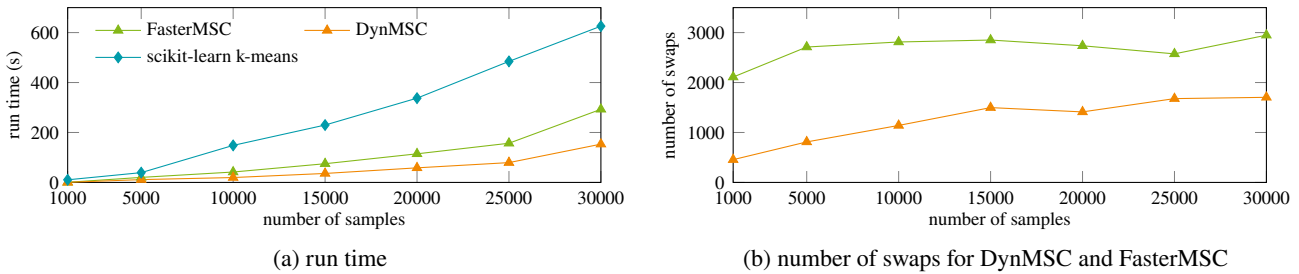
As expected, all methods scale approximately quadratic in the sample size $N$.
FastMSC is on average 50.66$\times$ faster than PAMMEDSIL for $k = 10$ and 10464.23$\times$ faster for $k = 100$,
supporting the expected $O(k^2)$ improvement by removing the nested loop and caching the distances to the nearest centers.
For FasterMSC we achieve even 639.34$\times$ faster run time than for PAMMEDSIL for $k=10$ and 78035.01$\times$ faster run time for $k=100$.
We expect FastPAM1 and FastMSC and also FasterPAM and FasterMSC to have similar scalability;
but since MSC also needs the third nearest neighbor, it needs to maintain more data and access more memory.
We observe that FastPAM1 is 2.50$\times$ faster than FastMSC for $k = 10$ and 1.57$\times$ faster for $k = 100$,
which is larger than expected and due to more iterations necessary for convergence in the MSC methods: FastPAM1 needs on average 14.86 iterations while FastMSC needs 33.48.
In contrast, FasterMSC is even 1.65$\times$ faster than FasterPAM for $k = 10$ and 1.96$\times$ faster for $k = 100$.

To evaluate the scalability of DynMSC, we test it on MNIST for 1000 to 30000 samples for a maximum $k=50$. We compare it to FasterMSC with random initialization run once for each $k=2$ to $k=50$ (keeping the best).
As additional baseline, we include a naive approach using scikit-learn (version 1.2.2) k-means for clustering, and Silhouette only for choosing the number of clusters.
We observe that DynMSC is 1.97$\times$ faster than repeatly running FasterMSC.
This speedup is primarily due to requiring 2.15$\times$ fewer swaps. The difference in the factor between swaps and run time is explained by additional remove operations in DynMSC, which are similarly expensive as the swaps and have to be performed 48$\times$ for the range of $k=2\ldots 50$. DynMSC is on average 9.01$\times$ faster than the popular sklearn routine because of the cost to repeatedly compute the Silhouette.

\section{Conclusions}
We showed that the Average Medoid Silhouette satisfies desirable
theoretical properties for clustering quality measures,
and as an approximation of the Average Silhouette Width
yields desirable results on real problems from gene expression analysis.
We propose a new algorithm for optimizing the Average Medoid Silhouette,
which provides a run time speedup of $O(k^2)$
compared to the earlier PAMMEDSIL algorithm
by caching the nearest centers
and of partial results based on FasterPAM.
This makes clustering by optimizing the Medoid Silhouette possible on much larger data sets than before.
The Medoid Silhouette can also be used to determine the number of clusters in a data set, and the DynMSC algorithm introduced in this article optimizes this process by avoiding redundant computations.
The ability to optimize a variant of the popular Silhouette measure directly
demonstrates the underlying property that any internal cluster evaluation measure specifies a clustering itself, and that the proper unsupervised evaluation of clusterings remains an unsolved problem.
But since \citet{DBLP:journals/sadm/VendraminCH10} found the Silhouette (and its variants) to be among the best and most robust clustering quality criteria, direct optimization of the Medoid Silhouette may be desirable.

Users of cluster analysis are advised to carefully choose the right evaluation measure and clustering method for their problem rather than following a ``default'' recommendation.
Even though the Silhouette appears to be popular and scores high in benchmarks \cite{DBLP:journals/sadm/VendraminCH10}, it may not the best choice for every problem.
In particular Silhouette may perform poorly when (1)~distance is not measured appropriately, (2)~data preprocessing is poor, (3)~clusters have non-convex shape, (4)~clusters vary significantly in diameter, (5)~clusters exist only in subspaces, or (6)~a hierarchy of clustering structures exists.

\section*{Declaration of Competing Interest}

The authors declare that they have no known competing financial interests or personal relationships that could have appeared to influence the work reported in this paper.

\section*{Acknowledgements}

Part of the work on this paper has been supported by the Deutsche Forschungsgemeinschaft (DFG) -– project number 124020371 –- within the Collaborative Research Center SFB 876 ``Providing Information by Resource-Constrained Analysis'', project A2. \url{https://sfb876.tu-dortmund.de/}

At the SISAP 2022 conference, Edgar Chávez suggested to try automatically choosing the number of clusters~$k$, given the run-time improvements of FasterMSC. DynMSC is our newly proposed solution to this challenge that exploits internal data structures of the method.

\printcredits

\bibliographystyle{cas-model2-names}

\bibliography{references}

\end{document}